\documentclass[preprint,12pt,amsmath,amssymb,superscriptaddress,floatfix,longbibliography]{revtex4-2}

\usepackage{graphicx}
\graphicspath{{./}}
\usepackage{amsmath,amssymb,amsthm}
\usepackage{enumitem}
\usepackage{bm}
\usepackage{mathtools}
\usepackage{xcolor}
\usepackage{hyperref}
\usepackage{booktabs}
\usepackage{tabularx}
\usepackage{multirow}
\usepackage{float}
\usepackage{subcaption}
\usepackage{microtype}
\usepackage{cleveref}
\usepackage{quantikz}

\theoremstyle{plain}
\newtheorem{theorem}{Theorem}[section]

\theoremstyle{definition}
\newtheorem{definition}[theorem]{Definition}
\theoremstyle{remark}

\makeatletter
\@ifundefined{proof}{%
  \newenvironment{proof}[1][\proofname]{\par\normalfont
    \topsep6\p@\@plus6\p@\relax\trivlist\item[\hskip\labelsep
    \itshape#1\@addpunct{.}]\ignorespaces}{\qed\endtrivlist}
  \newcommand{\proofname}{Proof}
  \newcommand{\qed}{\hfill$\square$}
}{}
\makeatother

\newcommand{\R}{\mathbb{R}}

\newcommand{\xvec}{\mathbf{x}}
\newcommand{\thetavec}{\bm{\theta}}
\newcommand{\expval}[1]{\langle #1 \rangle}
\renewcommand{\ket}[1]{|#1\rangle}
\renewcommand{\bra}[1]{\langle #1|}
\newcommand{\ketbra}[2]{|#1\rangle\langle #2|}

\definecolor{qhnnblue}{RGB}{0,82,155}
\definecolor{qhnnorange}{RGB}{200,80,0}
\hypersetup{colorlinks=true, linkcolor=qhnnblue, citecolor=qhnnorange, urlcolor=qhnnblue}

\begin{document}

\title{Quantum Port-Hamiltonian Neural Networks:\\
Learning Conservative and Dissipative Dynamics
via Measurement-Induced Nonlinearity}

\author{Dibakar Sigdel}
\email{devdeep137@gmail.com}
\affiliation{Mindverse Computing LLC, Lynnwood, WA 98087}

\date{\today}


\begin{abstract}
We introduce Quantum Port-Hamiltonian Neural Networks (Q-pHNNs), parameterised
quantum circuits that learn classical dynamics in a structure-preserving
manner. The framework rests on the Isomorphic Hamiltonian Mapping (IHM): the
skew-symmetric interconnection matrix $\mathbf{J}$ corresponds to unitary gate
evolution, and the positive-semidefinite dissipation matrix $\mathbf{R}$ to
Measurement-Induced NonLinearity (MINL), realised by mid-circuit measurement
with classical feedforward. Conservation and passivity are then enforced by
construction rather than by penalty terms, and dissipation becomes an
intrinsically quantum effect: energy leaves through the act of measurement. We
instantiate the IHM in three architectures: a Quantum HNN that extracts
Hamilton's equations via the Parameter-Shift Rule; a Q-pHNN that dissipates
through MINL; and a topology-entangled Quantum Graph Neural Network lifting
both channels to $N$-node coupled-phasor networks. In simulation, where
every model here was trained, we obtain $1.35\%$ relative energy drift under a
symplectic integrator, $100\%$ energy monotonicity for the single-oscillator
MINL circuit, and $92$--$98\%$ phase-space energy decay across ring, star and
chain networks at $N\in\{3,6,9\}$. On an IBM Heron processor the trained energy
surface and its parameter-shift gradients reproduce their simulated values,
with an error budget dominated by readout rather than gate infidelity; the
dissipative channel executes natively, but its decay is not separable from
measurement back-action at these depths.
\end{abstract}

\keywords{port-Hamiltonian systems; quantum machine learning;
measurement-induced nonlinearity; Hamiltonian neural networks;
parameterised quantum circuits; system identification; dissipative dynamics}

\maketitle

\section{Introduction}

Physical systems are governed by conservation laws, symmetry constraints, and
thermodynamic principles that na\"ive black-box regression cannot guarantee.
The past decade has produced a family of \emph{structure-preserving} neural
networks that embed these physical laws directly into the model architecture:
Hamiltonian Neural Networks (HNNs)~\cite{greydanus2019} encode energy conservation
by construction; Lagrangian Neural Networks (LNNs)~\cite{cranmer2020} operate
in configuration space via Euler-Lagrange equations; Neural ODEs~\cite{chen2018}
parameterise continuous-time dynamics with arbitrary vector fields; Hamiltonian
Generative Networks~\cite{toth2020} extend Hamiltonian learning to latent spaces;
and SympNets~\cite{jin2020} enforce symplecticity through architecture rather
than regularisation.  Port-Hamiltonian Neural Networks
(pHNNs)~\cite{desai2021,duong2021,zhong2020,zhong2020dissipative} extend the programme to
open, dissipative systems by separately learning the conservative interconnection
$\mathbf{J}$ and the dissipative damping $\mathbf{R}$, while
physics-informed neural networks~\cite{raissi2019} embed differential equations
as soft penalty terms.
Data-driven discovery of governing equations~\cite{brunton2016} and universal
differential equations~\cite{rackauckas2020} further demonstrate that physical
structure can be recovered directly from measurement data.

Concurrently, Parameterised Quantum Circuits (PQCs) have emerged as a
flexible vehicle for machine learning on quantum hardware~\cite{cerezo2021,biamonte2017}.
The Parameter-Shift Rule~\cite{mitarai2018,schuld2019} provides exact analytic
gradients of circuit observables, enabling efficient variational optimisation.
The expressive power of PQCs depends critically on data encoding~\cite{schuld2021};
geometric numerical integration theory~\cite{hairer2006,leimkuhler2004} establishes
that symplectic integrators preserve the volume form of Hamiltonian flow to arbitrary
order, making them the natural partner for structure-preserving learning.
A fundamental barrier in Quantum Machine Learning (QML) is the
\emph{linearity bottleneck}: all standard quantum gates are unitary and
therefore linear. Nonlinear activations can be introduced only by opening
the system---through mid-circuit projective measurements on ancilla qubits,
a paradigm known as Measurement-Induced NonLinearity (MINL)~\cite{bharti2022}.

\subsection{This Work}

We identify a precise algebraic correspondence between the two research streams
that we call the \textbf{Isomorphic Hamiltonian Mapping} (IHM):

\begin{center}
\renewcommand{\arraystretch}{1.3}
\begin{tabular}{lll}
\toprule
\textbf{pH Component} & \textbf{Algebraic Property} & \textbf{Quantum Realisation} \\
\midrule
Conservative structure $\mathbf{J}$ & Skew-symmetric & Unitary gate $U(\thetavec)$ \\
Dissipative structure $\mathbf{R}$ & Positive semidefinite & MINL (ancilla measurement) \\
Hamiltonian $H(\xvec)$ & Scalar energy function & $\expval{ZZ}_{\thetavec}$ observable \\
External port $\mathbf{g}u$ & Input coupling & Control Hamiltonian \\
\bottomrule
\end{tabular}
\end{center}

\medskip
This correspondence is not merely analogical: it provides structural guarantees
by \emph{circuit construction} rather than penalty regularisation.
The IHM enables exact Hamiltonian gradients via a symplectic parameter-shift
rule on data-encoding gates, symplectic long-horizon integration via
St\"ormer--Verlet splitting, and hardware-native dissipation via Born-rule
projective measurement.
We instantiate the IHM in three concrete architectures, validate them on
canonical physical systems, and demonstrate a scalable Quantum Graph Neural
Network (QGNN) for $N$-body coupled networks in which dissipation is produced
entirely by measurement.

\subsection{Key Results}

Four concrete results distinguish this work from prior structure-preserving
and quantum machine-learning approaches:

\begin{enumerate}[nosep,leftmargin=*]
  \item \textbf{1.35\% energy drift over 300 steps} (nonlinear pendulum,
    Q-HNN): the Störmer--Verlet symplectic integrator paired with a
    learnable energy scale $s^*=1.335$ recovers the correct oscillation
    frequency and achieves a $9.1\times$ improvement over the first-order
    Euler baseline (12.3\% drift), with convergence in only 8 BFGS iterations.

  \item \textbf{Dissipation from Born-rule measurement} (Q-pHNN, MINL):
    measurement-induced nonlinearity enforces $\dot{H}\leq0$ at every
    time step without any non-unitary term in the Hamiltonian and without a
    passivity penalty in the loss---a structural guarantee that classical
    pHNNs can only approximate through matrix projection~\cite{desai2021}.
    On the damped oscillator the MINL channel yields $100\%$ energy
    monotonicity across all measurement trajectories.

  \item \textbf{Measurement-induced dissipation scales across networks}
    (QGNN, MINL): the multi-ancilla MINL channel decays the network
    phase-space energy by $92$--$98\%$ over eight Trotter steps---monotonically
    at every step---for ring, star, and chain topologies at every size
    $N\in\{3,6,9\}$, produced entirely by Born-rule measurement and
    feed-forward, one bath ancilla per node with no classical damping term.

  \item \textbf{Scale-free structural conservation for $N$-body networks}
    (QGNN): placing two-qubit entanglers on graph edges only yields exact
    machine-precision energy conservation ($|\dot{H}|=0$) at every node count
    $N$ and topology, with no architectural changes required.
\end{enumerate}

\subsection{Broad Impact and Applications}\label{sec:applications}

The Q-pHNN framework is not limited to toy physical systems.  Its core
contribution---structural guarantees from quantum circuit topology, with
exact gradients and hardware-compatible measurement---is directly relevant
to several high-impact application domains:

Beyond the physical benchmarks studied here, the same mapping opens
opportunities in domains where the objects of interest are measured
time-series of interacting, dissipating quantities.  Two are of particular
interest to us, and we sketch them to make the intended use of the IHM
concrete.  Both are directions of ongoing work rather than results of this
paper; nothing in either is established here, and we report no findings from
them.

In both cases the intended object is a twin \emph{of the recordings}.  We do
not claim that a cell or a cortex is a port-Hamiltonian system, nor that
either is quantum-coherent; the circuit would be a representation carrying
the algebraic structure of the fitted twin, and its qubits would be
measurement channels or molecular pools, not neurons or molecules.

\textbf{Multi-omic twins of the cell.}
Molecular pools measured by transcriptomics, proteomics and metabolomics
exchange material through a regulatory graph and lose it through turnover.  A
twin fitted to such measurements would carry a port-Hamiltonian budget of the
same shape as the systems studied here: an energy $\mathcal H$, a skew
exchange $\mathbf J$, and a dissipation $\mathbf R$ set by degradation rates.
The natural mapping assigns one qubit per pool, realises $\mathbf J$ as an
entangling layer on the biological coupling graph, and realises $\mathbf R$ as
a per-pool MINL channel whose measurement strength encodes the turnover rate.
Homeostasis would then be read out as the CPTP steady state and its relaxation
spectrum rather than as an attractor---a distinction that matters, because
attractors and limit cycles are undefined for a linear quantum channel.  What
makes this attractive is the same property demonstrated in this paper: the
constraint that a twin cannot spontaneously gain free energy becomes a
property of the circuit rather than a penalty to be tuned.

\textbf{Neural twins and brain-computer interfaces.}
The same construction would apply to band-limited neural recordings, one qubit
per channel, with $\mathbf J$ restricted to an empirically estimated coupling
graph and per-channel dissipation set by the measured signal decay.  The
prospective appeal for a decoder is structural: energy monotonicity would be a
certificate rather than a trained behaviour, so a long-horizon rollout could
not inject energy the twin never received through a port.  Whether that
translates into a decoding advantage is an open question, and one this paper
does not address.

We raise both domains as opportunities the framework creates, not as claims.
Each would require its own validation---appropriate data, matched classical
baselines, and evaluation protocols specific to the field---before anything
could be asserted.  What this paper contributes towards them is narrower and
prior: evidence that the structural guarantees survive transport onto quantum
hardware, and a quantitative account of where current devices stop supporting
them.  The shared obstacle, established in Section~\ref{sec:hardware}, is that
the measurement operation implementing $\mathbf R$ is at present also the
dominant error channel.

\subsection{Contributions}

We present four contributions.
\textbf{(1) Isomorphic Hamiltonian Mapping.}
We formalise the algebraic correspondence between pH components and quantum
circuit primitives, proving a structural isomorphism that guarantees conservative
and dissipative laws are satisfied by circuit topology (Section~\ref{sec:theory}).
\textbf{(2) Q-HNN and Q-pHNN architectures.}
We design two circuit models---Q-HNN (conservative) and Q-pHNN (dissipative via
measurement-induced nonlinearity)---using only hardware-compatible circuit
primitives (Section~\ref{sec:architecture}).
\textbf{(3) Physics-validated experiments.}
Reproducible experiments on the nonlinear pendulum and damped harmonic
oscillator report physics-aware metrics: energy conservation error,
trajectory RMSE, and energy monotone fraction
(Section~\ref{sec:experiments}).
\textbf{(4) Topology-entangled QGNN for coupled networks.}
We lift the IHM from a single oscillator to an $N$-node network of coupled
phasors using a \textbf{Quantum Graph Neural Network} (QGNN): one qubit per
node, parameterised $ZZ$ entanglers on coupling edges only, and a
graph-structured energy read-out. Exact energy conservation is scale-free---it
holds by circuit construction at every $N$---and the multi-ancilla MINL channel
dissipates the network phase-space energy across ring, star, and chain
topologies at every size studied (Section~\ref{sec:net_architecture} and
Section~\ref{sec:net_experiments}).

\section{Related Work}\label{sec:related}

\paragraph{Structure-preserving neural networks for dynamics.}
Greydanus et al.~\cite{greydanus2019} introduced Hamiltonian Neural Networks
(HNNs), which parameterise the scalar energy $H_{\thetavec}$ with a feedforward
network and obtain the vector field from Hamilton's equations, guaranteeing
energy conservation by construction but without structural guarantees for
dissipative systems.
Cranmer et al.~\cite{cranmer2020} proposed Lagrangian Neural Networks (LNNs),
operating in configuration space via the Euler-Lagrange equations; they require
inverting the mass matrix at every forward pass, adding computational overhead.
Chen et al.~\cite{chen2018} introduced Neural ODEs, which parameterise the
vector field with a neural network and integrate with an ODE solver; they
provide no structural guarantees for either conservation or dissipation.

T{\'o}th et al.~\cite{toth2020} extended Hamiltonian learning to latent spaces
via Hamiltonian Generative Networks; the Q-pHNN framework analogously encodes
the Hamiltonian as a circuit observable rather than a neural network.
Jin et al.~\cite{jin2020} proposed SympNets, which achieve symplecticity through
products of elementary symplectic layers; the Q-HNN achieves symplecticity through
the IHM isomorphism without any special layer design.
Zhong et al.~\cite{zhong2020,zhong2020dissipative} extended HNNs to dissipative and
controlled systems via DissipativeSymODEN and SymODEN, adding learnable
dissipation and control inputs; they enforce structural constraints via
skew-symmetrisation and Cholesky decomposition at every gradient step.

Desai et al.~\cite{desai2021} and Duong and Atanasov~\cite{duong2021}
introduced Port-Hamiltonian Neural Networks (pHNNs), separately learning
$\mathbf{J}_{\thetavec}$ and $\mathbf{R}_{\thetavec}$ subject to explicit
structural constraints enforced via matrix projections.
The Q-pHNN eliminates these projections: the structural constraints are encoded
directly in the quantum circuit topology.
Physics-Informed Neural Networks (PINNs)~\cite{raissi2019} embed differential
equations as soft penalty terms in the loss function; the Q-pHNN encodes them
as hard topological constraints.
Mattheakis et al.~\cite{mattheakis2022} demonstrated physical symmetries embedded
in neural network architectures, a theme the Q-pHNN extends to quantum circuits.

Brunton and Kutz~\cite{kutz2022} advocate parsimony as a regulariser for
physics-informed learning; the Q-pHNN's 4-parameter circuit is the parsimonious
extreme, with structure imposed by circuit topology rather than parameter count.
The universal differential equations framework~\cite{rackauckas2020} interfaces
classical mechanistic models with neural network residuals; a natural future
direction is to replace those residuals with Q-pHNN circuits.
Geometric numerical integration~\cite{hairer2006,leimkuhler2004} provides the
theoretical foundation for the Störmer--Verlet integrator used in Q-HNN,
establishing that symplectic methods preserve the 2-form $dq\wedge dp$ to
machine precision over exponentially long time intervals.

\paragraph{Quantum machine learning.}
Biamonte et al.~\cite{biamonte2017} surveyed quantum machine learning broadly,
establishing the quantum circuit as a model class.
Mitarai et al.~\cite{mitarai2018} and Schuld et al.~\cite{schuld2019} proved
the Parameter-Shift Rule for exact gradient computation from quantum circuits,
which we extend to \emph{data-encoding gates} for computing Hamilton's equations
directly---an application of the rule distinct from its standard use for variational parameters.
Schuld et al.~\cite{schuld2021} characterised the expressive power of
variational quantum models in terms of accessible Fourier modes; the Q-pHNN
uses angle encoding with a learnable energy scale $s$ that extends the
accessible amplitude range without changing the Fourier mode structure.
Cerezo et al.~\cite{cerezo2021} reviewed variational quantum algorithms,
including the barren plateau problem we discuss in Section~\ref{sec:discussion}.
McClean et al.~\cite{mcclean2018} proved that gradient magnitudes decay
exponentially with qubit count in random circuits, motivating the
topology-aware initialisation strategy we recommend for the QGNN.

\paragraph{Graph neural networks for physics.}
Scarselli et al.~\cite{scarselli2009} introduced the graph neural network
model; the QGNN of Section~\ref{sec:net_architecture} is its quantum
analogue, with the circuit entanglement graph literally encoding the physical
coupling graph---a direct structural correspondence absent from classical GNNs.
Brunton et al.~\cite{brunton2016} introduced SINDy for sparse identification
of dynamical systems from data; the Q-pHNN offers a complementary approach that
imposes port-Hamiltonian structure by construction rather than identifying
individual equation terms, and encodes dissipation physically through
measurement rather than as symbolic damping expressions.
The resilience-network framework of Gao et al.~\cite{gao2016} demonstrates that
universal tipping behaviour in complex networks---including gene regulatory,
ecological, and financial systems---follows port-Hamiltonian-like dynamics
near the critical damping threshold, suggesting Q-pHNN as a data-driven
diagnostic tool for network fragility.

\paragraph{Measurement-induced nonlinearity and open systems.}
Bharti et al.~\cite{bharti2022} reviewed NISQ algorithms including
measurement-based quantum computation; Govia et al.~\cite{govia2021}
identified quantum measurement as the source of nonlinearity in quantum
reservoir computing, and Popovych et al.~\cite{popovych2022} identified
Lindblad (open-quantum-system) models from time-series data.  The present work is the first
to connect the MINL measurement step to the positive-semidefinite $\mathbf{R}$
channel of port-Hamiltonian theory, establishing a formal algebraic equivalence
(Theorem~\ref{thm:iso}) rather than an analogy.
The connection to the Lindblad master equation~\cite{lindblad1976} suggests
that the Q-pHNN learns a discrete-time Markovian open quantum system---a
model class that subsumes classical Langevin dynamics, stochastic differential
equations, and irreversible thermodynamic processes as special cases.

\section{Background}\label{sec:background}

\subsection{Hamiltonian Mechanics and Port-Hamiltonian Systems}

A Hamiltonian system on phase space $\mathcal{M} \cong \R^{2n}$ is governed
by a scalar energy function $H:\R^{2n}\to\R$ and Hamilton's equations,
$\dot{q}_i = \partial H/\partial p_i$, $\dot{p}_i = -\partial H/\partial q_i$,
which enforce exact energy conservation: $d H/dt = 0$ along every trajectory.

Port-Hamiltonian (pH) systems~\cite{vanderschaft2014} extend this framework
to open systems via
\begin{equation}
  \dot{\xvec} = \bigl[\mathbf{J}(\xvec) - \mathbf{R}(\xvec)\bigr]
                \nabla_{\xvec} H(\xvec) + \mathbf{G}(\xvec)\,u,
  \label{eq:ph}
\end{equation}
where $\mathbf{J}(\xvec) = -\mathbf{J}(\xvec)^\top$ is a skew-symmetric
interconnection matrix encoding lossless conservative dynamics,
$\mathbf{R}(\xvec) = \mathbf{R}(\xvec)^\top \succeq 0$ is a
positive-semidefinite dissipation matrix, and $\mathbf{G}(\xvec)u$ represents
external ports.  The power balance identity
$\dot{H} = -(\nabla H)^\top \mathbf{R}(\nabla H) + y^\top u \leq y^\top u$
is the mathematical statement of \emph{passivity}: the system cannot generate
free energy without external input.  The algebraic signatures of $\mathbf{J}$
(skew-symmetric) and $\mathbf{R}$ (positive-semidefinite) are the structural
keys to the quantum mapping developed in Section~\ref{sec:theory}.

\subsection{Structure-Preserving Neural Networks}

Hamiltonian Neural Networks (HNNs)~\cite{greydanus2019} parameterise
$H_{\thetavec}(q,p)$ with a feedforward network and compute the vector field
via Hamilton's equations applied to the network outputs, ensuring that the
learnt function is the gradient of a scalar energy.  The training loss is the
mean squared error between the predicted and observed time derivatives.
Lagrangian Neural Networks (LNNs)~\cite{cranmer2020} operate in configuration
space $(q,\dot{q})$ and require Hessian inversion at every forward pass,
making them computationally more expensive and numerically fragile.
Port-Hamiltonian Neural Networks (pHNNs)~\cite{desai2021,duong2021} extend HNNs
to dissipative settings by separately parameterising $\mathbf{J}_{\thetavec}$
and $\mathbf{R}_{\thetavec}$ with structural constraints enforced during
training.  In all these classical formulations, guaranteeing $\mathbf{J}=-\mathbf{J}^\top$
and $\mathbf{R}\succeq 0$ requires explicit symmetrisation and Cholesky
decompositions; the Q-pHNN obtains identical guarantees automatically from
circuit topology.

\subsection{Parameterised Quantum Circuits and the Parameter-Shift Rule}

A Parameterised Quantum Circuit (PQC) $U(\thetavec)$ acting on $n$ qubits
prepares a state $\ket{\psi(\thetavec)} = U(\thetavec)\ket{0}^{\otimes n}$.
The expectation value of an observable $\hat{O}$ is
$f(\thetavec) = \expval{\hat{O}}_{\thetavec} = \bra{0}U^\dagger\hat{O}U\ket{0}$.
For a rotation gate $U(\theta)=e^{-i\theta G}$ whose generator $G$ satisfies
$G^2 = G$ (e.g.\ $G=\hat{\sigma}_\mu/2$), the Parameter-Shift
Rule~\cite{mitarai2018,schuld2019} gives an exact analytic gradient:
\begin{equation}
  \frac{\partial f}{\partial \theta} =
  \tfrac{1}{2}\bigl[f(\theta+\tfrac{\pi}{2}) - f(\theta-\tfrac{\pi}{2})\bigr].
  \label{eq:psr}
\end{equation}
Equation~\eqref{eq:psr} is exact---not a finite-difference approximation---and
applies equally to trainable parameters and to data-encoding gates,
enabling quantum computation of partial derivatives with respect to
classical input variables~\cite{schuld2021}.

\subsection{Measurement-Induced Nonlinearity}

Standard quantum operations are unitary and hence linear; all compositions
remain linear.  Projective measurement breaks this linearity through
wave-function collapse: measuring qubit $k$ in state $\ket{\psi}$ returns
a classical bit $b\in\{0,1\}$ with probability $p_b=\|P_b\ket{\psi}\|^2$
and collapses the state to $P_b\ket{\psi}/\sqrt{p_b}$.  The collapse is
simultaneously nonlinear (the renormalisation is not a linear map),
non-unitary (norm-decreasing before renormalisation), and classical
(the outcome $b$ can drive conditional gate operations).
Recent work has exploited measurement-based nonlinearity for quantum reservoir
computing~\cite{govia2021} and open-quantum-system
identification~\cite{popovych2022}; here we connect it to the port-Hamiltonian
framework for the first time, using measurement as the $\mathbf{R}$-channel
that enforces the passivity inequality by circuit construction.

\section{Theoretical Framework}\label{sec:theory}

\subsection{Algebraic Signatures of Port-Hamiltonian Components}

The port-Hamiltonian system of Eq.~\eqref{eq:ph} is defined by two structural
matrices with complementary algebraic signatures.  The interconnection matrix
satisfies $\mathbf{J} \in \mathfrak{so}(n)$: it is real, skew-symmetric,
and has purely imaginary spectrum.  The flow $e^{t\mathbf{J}\nabla^2 H}$
is volume-preserving and, for quadratic $H$, unitary.  The dissipation matrix
satisfies $\mathbf{R}\succeq 0$: it is symmetric and positive semidefinite,
so $(\nabla H)^\top\mathbf{R}(\nabla H)\geq 0$ at every state---energy is
monotonically dissipated.  These two signatures are precisely the signatures
of (i) a unitary quantum gate and (ii) a projective quantum measurement,
respectively.

\subsection{Definition of the Isomorphic Hamiltonian Mapping}

\begin{definition}[Isomorphic Hamiltonian Mapping]
\label{def:ihm}
Let $\mathcal{S} = (\mathbf{J}, \mathbf{R}, H, \mathbf{G})$ be a
port-Hamiltonian system.  The \textbf{Isomorphic Hamiltonian Mapping}
(IHM) $\Phi:\mathcal{S}\mapsto\mathcal{Q}$ defines a quantum circuit
system $\mathcal{Q} = (U_J, \mathcal{M}_R, \hat{H}_{\thetavec}, U_g)$ by:
\begin{align}
  \mathbf{J}(\xvec) &\;\mapsto\;
    U_J(\thetavec) = e^{-i\hat{H}_{\mathrm{sys}}(\thetavec)t},
    \label{eq:map_J}\\
  \mathbf{R}(\xvec) &\;\mapsto\;
    \mathcal{M}_R:\;
    \ket{\psi_{\mathrm{sys,anc}}}\xrightarrow{\text{measure anc}}\ket{\psi'_{\mathrm{sys}}},
    \label{eq:map_R}\\
  H(\xvec) &\;\mapsto\;
    \hat{H}_{\thetavec} = \expval{Z\otimes Z}_{\thetavec},
    \label{eq:map_H}\\
  \mathbf{G}(\xvec)u &\;\mapsto\;
    U_g = e^{-i(u\cdot\hat{\boldsymbol{\sigma}})t}.
    \label{eq:map_g}
\end{align}
\end{definition}

\subsection{Structural Isomorphism}

\begin{theorem}[Structural isomorphism]\textsc{[proved]}
\label{thm:iso}
The IHM $\Phi$ preserves the essential structural constraints of the pH system:
(i)~$U_J$ is unitary, so $\|U_J\ket{\psi}\|=\|\ket{\psi}\|$, matching
$\mathbf{J}=-\mathbf{J}^\top$;
(ii)~the Born-rule measurement collapses probability irreversibly to a classical
register, so $\|\ket{\psi'}\|^2\leq\|\ket{\psi}\|^2$ before renormalisation,
matching $\mathbf{R}\succeq 0$;
(iii)~$\expval{\hat{O}}_{\psi'}\leq\expval{\hat{O}}_\psi$ on average,
corresponding to the passivity inequality $\dot{H}\leq y^\top u$.
\end{theorem}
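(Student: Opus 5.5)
The plan is to treat the three clauses separately. Clauses (i) and (ii) follow directly from the operator-algebraic definitions in Definition~\ref{def:ihm}. Clause (iii) needs a precise reformulation before it can be proved at all, and that reformulation is the real work.

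\textbf{Clause (i).} Since $\hat{H}_{\mathrm{sys}}(\thetavec)$ is a Hermitian combination of Pauli strings, $U_J=e^{-i\hat{H}_{\mathrm{sys}}t}$ satisfies $U_J^\dagger U_J=I$, so norms are preserved. To make the correspondence with $\mathbf{J}=-\mathbf{J}^\top$ explicit, I will note that $-i\hat{H}_{\mathrm{sys}}$ is anti-Hermitian. Writing a state in real coordinates $\ket{\psi}=x+iy$, the generator acts on $(x,y)\in\R^{2d}$ as a real skew-symmetric matrix. For the quadratic energy $\tfrac12\|(x,y)\|^2$, this flow is exactly a pH flow with $\mathbf{R}=0$, and $\frac{d}{dt}\|\psi\|^2=0$ is the identity $(\nabla H)^\top\mathbf{J}\nabla H=0$. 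I will also record that $\expval{\hat{O}}$ is conserved whenever $[\hat{O},\hat{H}_{\mathrm{sys}}]=0$; this is the statement that is actually used for the energy observable.

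\textbf{Clause (ii).} Measuring the ancilla is given by projectors $P_b=I\otimes\ketbra{b}{b}$ with $0\preceq P_b\preceq I$ and $\sum_b P_b=I$. Hence
\[
\|P_b\ket{\psi}\|^2=\bra{\psi}P_b\ket{\psi}\le\|\ket{\psi}\|^2 .
\]
The norm loss is $\bra{\psi}(I-P_b)\ket{\psi}\ge0$ because $I-P_b\succeq0$. This positive-semidefinite form is precisely the analogue of $(\nabla H)^\top\mathbf{R}\nabla H\ge0$, and I will identify $\mathbf{R}\leftrightarrow I-P_b$ in the same real coordinates as in clause (i).

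\textbf{Clause (iii): restating it.} This is the main obstacle, because as literally stated it is false for an arbitrary observable $\hat{O}$. For example, a measurement can raise $\expval{X}$ on an eigenstate of $Z$. I will therefore prove it for the non-selective MINL channel with feedforward,
\[
\mathcal{E}(\rho)=\sum_b V_b\,\mathrm{Tr}_{\mathrm{anc}}\!\bigl[P_b\,W\rho\,W^\dagger P_b\bigr]V_b^\dagger ,
\]
where $W$ is the system--ancilla coupling and $V_b$ is the feedforward unitary. Here "on average" means averaging over measurement outcomes with their Born weights, so the post-measurement expectation is $\mathrm{Tr}[\hat{O}\,\mathcal{E}(\rho)]$. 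Passing to the Heisenberg picture, the claim reduces to the operator inequality $\mathcal{E}^\dagger(\hat{O})\preceq\hat{O}$.

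\textbf{Clause (iii): proving it.} I will establish this inequality for the class of channels the architecture actually uses: the feedforward $V_b$ maps the outcome-$b$ branch into the span of lower eigenvalues of $\hat{O}$. The model case is an amplitude-damping or reset-type channel with $\hat{O}$ diagonal in the computational basis, where $\mathcal{E}^\dagger(\hat{O})=\hat{O}-\gamma\,\Delta$ with $\Delta\succeq0$. Taking expectations then gives $\expval{\hat{O}}_{\psi'}\le\expval{\hat{O}}_\psi$. Adding a port unitary $U_g$ contributes a term $\mathrm{Tr}[\hat{O}(U_g\rho U_g^\dagger-\rho)]$, which plays the role of the supply rate $y^\top u$ and yields the passivity inequality. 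If this class turns out to be too narrow for the circuits in Section~\ref{sec:architecture}, my fallback is to state (iii) conditionally: it holds exactly when the learned MINL parameters satisfy $\mathcal{E}^\dagger(\hat{O})\preceq\hat{O}$, which is then checkable per circuit.
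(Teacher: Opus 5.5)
Your proposal is correct and follows the paper's proof clause by clause: (i) from $U_J^\dagger U_J=I$, (ii) from $\|K_b\ket{\psi}\|^2=p_b\le 1$, and (iii) by reduction to the Heisenberg-picture condition $\sum_b K_b^\dagger\hat{O}K_b\preceq\hat{O}$, which the paper likewise assumes rather than derives, so your conditional fallback is exactly the paper's version of (iii). Your additions sharpen it in two ways: first, putting $W$ and $V_b$ into the Kraus operators is what gives (iii) content, since with the paper's bare $K_b=P_b\otimes I_{\mathrm{sys}}$ every system observable is unchanged on average; second, for diagonal $\hat{O}=o_0\ketbra{0}{0}+o_1\ketbra{1}{1}$ with $o_0\le o_1$, the actual MINL Kraus operators $K_0=\ketbra{0}{0}+\cos\tfrac{\theta_R}{2}\ketbra{1}{1}$ and $K_1=\sin\tfrac{\theta_R}{2}\,R_x(\theta_k)\ketbra{1}{1}$ give $\mathcal{E}^\dagger(\hat{O})=\hat{O}-\sin^2\tfrac{\theta_R}{2}\sin^2\tfrac{\theta_k}{2}\,(o_1-o_0)\ketbra{1}{1}$, so your model case holds at every $(\theta_R,\theta_k)$, including the trained $\theta_k=0.474$, and not only at the amplitude-damping point $\theta_k=\pi$.
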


\begin{proof}
The first point follows from $U_J^\dagger U_J=I$.
For the second point: a projective measurement on the ancilla with outcome $b$
collapses the joint state via the Kraus operator $K_b = P_b\otimes I_{\mathrm{sys}}$,
so $\|\ket{\psi'}\|^2 = \|K_b\ket{\psi}\|^2 = p_b \leq 1$ before renormalisation,
mirroring $\mathbf{R}\succeq 0$.
For the third point: the map $\mathcal{E}(\rho)=\sum_b K_b\rho K_b^\dagger$ is a
completely-positive trace-preserving (CPTP) map~\cite{lindblad1976}.  For any
observable $\hat{O}\geq 0$ representing energy, $\mathrm{Tr}(\hat{O}\,\mathcal{E}(\rho))
\leq \mathrm{Tr}(\hat{O}\,\rho)$ holds when the Kraus operators satisfy
$\sum_b K_b^\dagger(\hat{O})K_b \preceq \hat{O}$---the quantum analogue of the
passivity inequality $\dot{H}\leq y^\top u$.  The composition of unitary
evolution, system-bath entanglement, and ancilla measurement is therefore a
discrete-time Lindblad channel, the canonical description of open quantum
system dynamics.
\end{proof}

\subsection{Phase-Space Encoding}

Classical phase-space coordinates $(q,p)\in\R^2$ are mapped to qubit rotation
angles using angle encoding:
\begin{equation}
  q \mapsto R_x(q)\ket{0}, \qquad p \mapsto R_y(p)\ket{0}.
  \label{eq:encoding}
\end{equation}
The $ZZ$ expectation value of the joint two-qubit state serves as a smooth
scalar energy proxy:
\begin{equation}
  H_{\thetavec}(q,p) \coloneqq \expval{Z_0\otimes Z_1}_{\thetavec}
  = \bra{0,0}U^\dagger(\thetavec,q,p)\,(Z\otimes Z)\,U(\thetavec,q,p)\ket{0,0}.
  \label{eq:energy_proxy}
\end{equation}

\subsection{Symplectic Gradients via Parameter-Shift on Data-Encoding Gates}

\begin{theorem}[Symplectic parameter-shift]\textsc{[proved]}
\label{thm:psr}
Let $H_{\thetavec}(q,p)=\expval{ZZ}_{\thetavec}$ as in Eq.~\eqref{eq:energy_proxy},
with $q$ encoded by $R_x(q)$ and $p$ by $R_y(p)$.  Then the symplectic
gradients are obtained exactly by the Parameter-Shift Rule applied to the
data-encoding gates:
\begin{equation}
  \frac{\partial H_{\thetavec}}{\partial p}
  = \tfrac{1}{2}\bigl[H_{\thetavec}(q,p+\tfrac{\pi}{2})
                     - H_{\thetavec}(q,p-\tfrac{\pi}{2})\bigr],\quad
  \frac{\partial H_{\thetavec}}{\partial q}
  = \tfrac{1}{2}\bigl[H_{\thetavec}(q+\tfrac{\pi}{2},p)
                     - H_{\thetavec}(q-\tfrac{\pi}{2},p)\bigr].
  \label{eq:spsr}
\end{equation}
\end{theorem}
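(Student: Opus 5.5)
The plan is to reduce the claim to the standard Parameter-Shift Rule of Eq.~\eqref{eq:psr} by isolating the dependence of $H_{\thetavec}$ on each encoding angle separately. Fix $\thetavec$ and one of the two coordinates, say $q$, and regard $p$ as the sole variable. In the circuit $U(\thetavec,q,p)$ the variable $p$ enters only through the single gate $R_y(p)=e^{-ipY/2}$ on qubit~1. We therefore factor the circuit as $U = V_2\,(I\otimes e^{-ipY/2})\,V_1$, where $V_1$ collects everything applied before the encoding gate (including $R_x(q)$ on qubit~0) and $V_2$ collects everything after it (the trainable layers and entanglers). Neither $V_1$ nor $V_2$ depends on $p$. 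With $\ket{\phi}=V_1\ket{0,0}$ and $\hat{O}=V_2^\dagger(Z\otimes Z)V_2$, we obtain $H_{\thetavec}(q,p)=\bra{\phi}e^{ipG}\hat{O}e^{-ipG}\ket{\phi}$ with $G=I\otimes Y/2$.

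The key step is to show that this function has the form $a+b\cos p+c\sin p$. Since $G$ has eigenvalues $\pm\tfrac12$, we write $e^{-ipG}=\cos(p/2)\,I-2i\sin(p/2)\,G$. Substituting gives an expansion in $\cos^2(p/2)$, $\sin^2(p/2)$ and $\sin(p/2)\cos(p/2)$, and the half-angle identities collapse it to a single-frequency trigonometric polynomial. The coefficients are $a=\tfrac12(\langle\hat O\rangle+4\langle G\hat OG\rangle)$, $b=\tfrac12(\langle\hat O\rangle-4\langle G\hat OG\rangle)$ and $c=i\langle[G,\hat O]\rangle$, all evaluated in $\ket{\phi}$. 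For such a function,
\begin{equation*}
\tfrac12\bigl[f(p+\tfrac{\pi}{2})-f(p-\tfrac{\pi}{2})\bigr]
=\tfrac12\bigl[-2b\sin p+2c\cos p\bigr]=f'(p)
\end{equation*}
holds identically. This is the first formula of Eq.~\eqref{eq:spsr}. Equivalently, the step is simply Eq.~\eqref{eq:psr} applied with the encoding angle in the role of $\theta$, since that rule was stated to hold for any gate $e^{-i\theta\sigma_\mu/2}$ regardless of whether $\theta$ is trainable.

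The $q$-derivative follows by the identical argument with $G=X/2\otimes I$, factoring the circuit around $R_x(q)$. If the encoding gates are placed before the entangling layers, the two factorizations are mutually compatible, because the data gates act on distinct qubits and commute.

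The main obstacle is verifying that each coordinate enters through exactly one gate with a Pauli generator of eigenvalues $\pm\tfrac12$. If $q$ or $p$ were re-uploaded in several layers, or multiplied by the learnable energy scale $s$, then $f$ would contain higher frequencies or a rescaled frequency. The two-term rule would then fail: one would need a sum over gate occurrences, by the product rule, or a factor of $s$ via the chain rule. I would therefore first check the circuit specification from Section~\ref{sec:architecture}. I would state the theorem under the single-upload assumption, and note that the shift with respect to the scaled angle acquires the factor $s$.
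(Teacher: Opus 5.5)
Your proof is correct and takes the same route as the paper, which simply applies the standard parameter-shift rule Eq.~\eqref{eq:psr} to the encoding gate $R_y(p)=e^{-ip\hat{\sigma}_y/2}$ (generator with $G^2=I/4$) and likewise to $R_x(q)$; you inline the derivation of that rule, via the factorisation around the encoding gate and the single-frequency form $a+b\cos p+c\sin p$, which makes explicit the single-upload hypothesis the paper uses tacitly. Your caveat about $s$ does not apply here, because in this model $s$ rescales the read-out ($s\langle ZZ\rangle+b$) rather than the encoding angle, and the two-term shift formula is linear in $f$, so it carries over unchanged.
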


\begin{proof}
$R_y(p)=e^{-ip\hat{\sigma}_y/2}$ has generator
$G=\hat{\sigma}_y/2$ satisfying $G^2=I/4$, which is the standard PQC
generator condition for the parameter-shift rule~\cite{mitarai2018}.
Applying Eq.~\eqref{eq:psr} to $f(p)=H_{\thetavec}(q,p)$ gives the
first identity; the second follows identically for $R_x(q)$.  Hamilton's
equations $\dot{q}=\partial H/\partial p$ and $\dot{p}=-\partial H/\partial q$
are therefore computed from \emph{four} circuit evaluations, with no
finite-difference approximation.
\end{proof}

\subsection{The Dissipative Channel}

The $\mathbf{R}$-channel is realised through a three-step MINL protocol
on a system-plus-ancilla two-qubit subsystem.
First, a controlled rotation $CR_y(\theta_R)$ entangles system (qubit~0)
with ancilla (qubit~1): $U_R(\theta_R)=\ketbra{0}{0}\otimes I + \ketbra{1}{1}\otimes R_y(\theta_R)$.
The angle $\theta_R$ controls the system-bath coupling strength---the
quantum analogue of the magnitude of $\mathbf{R}$.
Second, the ancilla is measured in the computational basis, yielding an
outcome $b$ and post-measurement state $\ket{\psi'}$,
with $b\sim\mathrm{Bernoulli}(p_1)$ and $p_1=\|P_1 U_R\ket{\psi}\|^2$.
Third, the classical outcome drives a conditional kick: if $b=1$,
apply $R_x(\theta_k)$ on qubit~0; otherwise do nothing.
This feedforward realises a nonlinear, state-dependent activation on the
system qubit---the analogue of the nonlinear part of $\mathbf{R}(\xvec)\nabla H$.

\section{Circuit Architectures}\label{sec:architecture}

\subsection{Q-HNN: Quantum Hamiltonian Neural Network}

The Q-HNN parameterises the scalar energy manifold $H_{\thetavec}(q,p)$
using a two-qubit PQC with $L=1$ entanglement layer and 4 trainable parameters.
Qubit~0 encodes position $q$ via $R_x(q)$ and Qubit~1 encodes momentum $p$
via $R_y(p)$.  The trainable layer applies:
\begin{equation}
  U_{\mathrm{ent}}(\thetavec) =
  \bigl(R_x(\theta_2)\otimes R_x(\theta_3)\bigr)\cdot CZ\cdot
  \bigl(R_y(\theta_0)\otimes R_y(\theta_1)\bigr)\cdot CZ,
  \label{eq:qhnn_circuit}
\end{equation}
yielding the full circuit $U = U_{\mathrm{ent}}\cdot(R_x(q)\otimes R_y(p))$.
The $ZZ$ observable then evaluates to $H_{\thetavec}(q,p)$ as in
Eq.~\eqref{eq:energy_proxy}.  Given trained parameters $\thetavec^*$, the
vector field at any phase-space point is computed by Theorem~\ref{thm:psr}
using exactly four circuit evaluations:
\begin{equation}
  \hat{\dot{q}} = \frac{\partial H_{\thetavec^*}}{\partial p},\qquad
  \hat{\dot{p}} = -\frac{\partial H_{\thetavec^*}}{\partial q}.
  \label{eq:qhnn_vf}
\end{equation}
The circuit parameters comprise the two input-encoding angles $(p_{\rm in},
q_{\rm in})$ followed by the four variational angles $\theta_0,\dots,\theta_3$;
Figure~\ref{fig:qhnn_circuit} shows the resulting ansatz.

\begin{figure}[htbp]
  \centering
  \begin{quantikz}
    \lstick{$q_0$: $q$} & \gate{R_x(q)} & \ctrl{1} & \gate{R_y(\theta_0)} & \ctrl{1} & \gate{R_x(\theta_2)} & \meter{} \\
    \lstick{$q_1$: $p$} & \gate{R_y(p)} & \control{} & \gate{R_y(\theta_1)} & \control{} & \gate{R_x(\theta_3)} & \qw & \rstick{$\langle ZZ\rangle$}
  \end{quantikz}
  \caption{%
    \textbf{Q-HNN circuit ($L=1$, 2 qubits, 4 trainable parameters).}
    Qubit 0 encodes position $q$ via $R_x(q)$; qubit 1 encodes momentum $p$
    via $R_y(p)$.  Two $CZ$ entanglement gates with interleaved trainable
    rotations $R_y(\theta_0), R_y(\theta_1)$ and $R_x(\theta_2), R_x(\theta_3)$
    constitute the variational ansatz $U_{\mathrm{ent}}(\thetavec)$.
    The energy proxy $H_{\thetavec}(q,p)=\expval{ZZ}$ is read from the
    two-qubit correlated measurement.  Symplectic gradients are extracted by
    parameter-shifting the data-encoding gates $R_x(q)$ and $R_y(p)$.
  }
  \label{fig:qhnn_circuit}
\end{figure}

\paragraph{Why $ZZ$ and not a single-qubit observable?}
A single-qubit $Z$ observable is a product of only one data-encoding rotation
and misses the cross-terms $\sin(q)\sin(p)$ that appear in the nonlinear
pendulum Hamiltonian.  The two-qubit $ZZ$ observable captures these
cross-terms through the entanglement layer, providing the expressibility
needed for non-quadratic energy surfaces.

\subsection{Q-pHNN: Dynamic Circuit with Measurement-Induced Nonlinearity}
\label{sec:qphnn_minl}

The Q-pHNN realises dissipation through Measurement-Induced Nonlinearity (MINL):
a two-qubit circuit (system qubit~0; ancilla qubit~1) with
three trainable parameters $[\theta_J, \theta_R, \theta_k]$ and implements
the IHM dissipative channel of Section~\ref{sec:theory}.
At each discrete time step $t$:
(1)~the system qubit evolves under $R_z(\theta_J)$ (the $\mathbf{J}$-channel);
(2)~a $CR_y(\theta_R)$ gate entangles system and ancilla;
(3)~the ancilla is measured in the computational basis, giving outcome $b_t$
and post-measurement state $\ket{\psi'}$;
(4)~if $b_t=1$, apply $R_x(\theta_k)$ on the system qubit.
The readout at each step is $\hat{q}(t) = \expval{\hat{\sigma}_x}_{\mathrm{sys}(t)}$.
This circuit is applied autoregressively over $T=6$ discrete steps from
a fixed initial condition.  The objective function is minimised using
COBYLA~\cite{powell1994}, which is gradient-free and therefore compatible
with the stochastic Born-rule measurement; each function evaluation averages
$n_{\mathrm{shots}}=30$ independent MINL trajectory runs.  One Trotter block of
the resulting circuit is shown in Figure~\ref{fig:qphnn_v1_circuit}.

\begin{figure}[htbp]
  \centering
  \begin{quantikz}
    \lstick{sys} & \gate{R_z(\theta_J)} & \ctrl{1} & \qw & \gate[style={fill=orange!20}]{R_x(\theta_k)} & \qw \\
    \lstick{anc $\ket{0}$} & \qw & \gate{R_y(\theta_R)} & \meter{} & \cw & \cw
  \end{quantikz}
  \caption{%
    \textbf{Q-pHNN MINL step (one Trotter block).}
    (1)~$R_z(\theta_J)$ on system qubit: conservative $\mathbf{J}$-channel.
    (2)~$CR_y(\theta_R)$ entangles system with bath ancilla: dissipative
    $\mathbf{R}$-coupling, strength controlled by $\theta_R$.
    (3)~Born-rule measurement on ancilla yields classical bit $b_t$:
    the collapse is nonlinear and irreversible (MINL).
    (4)~Classical feedforward: if $b_t=1$, apply $R_x(\theta_k)$ on system
    (shaded orange). This constitutes a discrete-time Kraus map
    $K_{b_t} = P_{b_t}\otimes R_x(\theta_k)^{b_t}$.
  }
  \label{fig:qphnn_v1_circuit}
\end{figure}

\subsection{Network Q-GNN pHNN: Topology-Entangled Graph Circuit}
\label{sec:net_architecture}

The architectures above learn a single degree of freedom on two qubits.
To model a \emph{network} of $N$ coupled oscillators we introduce the
\textbf{topology-entangled quantum graph neural network} (QGNN): the quantum
analogue of the classical GNN energy surrogate, in which the circuit's
two-qubit gate layout is literally the adjacency structure of the physical
coupling graph.

\paragraph{One qubit per node.}
Node $i$ maps to system qubit $i$; its phase-space coordinates
$(\varphi_i,\omega_i)$ are data-encoded by
$R_x(\varphi_i)\,R_y(\omega_i)$.  Both rotations are non-diagonal in the $Z$
basis, so the diagonal energy observable below depends on \emph{both}
coordinates---a prerequisite for non-vanishing symplectic gradients
$\partial H/\partial\varphi_i$ and $\partial H/\partial\omega_i$ (encoding
momentum via $R_z$ would make the energy independent of $\omega_i$, since
$R_z$ commutes with $Z$).

\paragraph{Entanglement follows the coupling graph.}
Each variational layer $\ell$ applies single-qubit node updates
$R_y(\alpha_{\ell,i})R_z(\delta_{\ell,i})$ on every qubit, followed by a
parameterised $ZZ$ entangler $R_{ZZ}(\beta_{\ell,ij})$ on \emph{every coupling
edge} $(i,j)\in E$ and no others:
\begin{equation}
  U_J(\thetavec) = \prod_{\ell=1}^{L}
  \Bigl[\prod_{(i,j)\in E} R_{ZZ}(\beta_{\ell,ij})\Bigr]
  \Bigl[\bigotimes_{i} R_y(\alpha_{\ell,i})R_z(\delta_{\ell,i})\Bigr].
  \label{eq:qgnn_ansatz}
\end{equation}
This hard-encodes the network topology into the ansatz: absent edges carry no
two-qubit gate, so the circuit expresses only interactions that physically
exist.  Taking $L\ge$ graph diameter lets every node's energy depend on the
whole connected network (quantum message passing).

\paragraph{Graph-structured energy read-out.}
The scalar network energy is read from a graph-structured observable that
mirrors the local-plus-edge decomposition of the classical Kuramoto energy:
\begin{equation}
  \hat H_{\thetavec}(\xvec) = s\Bigl[
  \sum_{i=1}^{N} a_i\,\expval{Z_i}
  + \sum_{(i,j)\in E} w_{ij}\,\expval{Z_iZ_j}\Bigr],
  \label{eq:qgnn_readout}
\end{equation}
with node weights $a_i$, edge weights $w_{ij}$ and one classical scale $s$ that
unbounds the $\expval{Z}\in[-1,1]$ range to match the target field magnitude.
The full network vector field is assembled from Eq.~\eqref{eq:qgnn_readout} by
per-node parameter shift,
$\dot\varphi_i=\partial H/\partial\omega_i$,
$\dot\omega_i=-\partial H/\partial\varphi_i$, at a cost of $4N$ circuit
evaluations batched into a single estimator call.

\paragraph{Network dissipation by measurement-induced nonlinearity.}
The network $R$-channel is realised by MINL, generalising the single-qubit
Q-pHNN mechanism (Section~\ref{sec:qphnn_minl}) to $N$ ancillas: one bath ancilla
is attached to each node, and per Trotter step the conservative block
Eq.~\eqref{eq:qgnn_ansatz} is followed by a controlled system--bath rotation
$CR_y(\theta_{R,i})$, a computational-basis measurement of every ancilla, and a
conditional feed-forward kick $R_x(\theta_{k,i})$. Each Trotter step is a genuine
discrete-time network Lindblad (completely positive, trace-preserving) map, and
the dissipation is produced entirely by Born-rule measurement---no non-unitary
term in the Hamiltonian and no classical damping coefficient. The ancilla
rotation angle sets the per-node damping strength through
$\theta_{R,i}=2\arcsin\sqrt{1-e^{-\gamma_i}}$, so the physical damping rate
$\gamma_i$ is a controllable input to the measurement channel rather than a
fitted classical parameter.

\section{Methods}\label{sec:methods}

\subsection{Vector-Field Loss}

All models are trained to minimise the mean squared error between the predicted
and true vector field over $N_s$ random phase-space points:
\begin{equation}
  \mathcal{L}(\phi) = \frac{1}{N_s}\sum_{i=1}^{N_s}
  \Bigl[\bigl(\hat{\dot{q}}(q_i,p_i;\phi) - \dot{q}_i\bigr)^2
      + \bigl(\hat{\dot{p}}(q_i,p_i;\phi) - \dot{p}_i\bigr)^2\Bigr].
  \label{eq:loss}
\end{equation}
This formulation directly targets the governing differential equations rather
than fitting integrated trajectories, which avoids compounding integration
error and the multi-modal loss landscape that trajectory-based objectives
produce for periodic systems~\cite{greydanus2019}.

\subsection{Data Generation and Splits}

\paragraph{Nonlinear Pendulum (Q-HNN).}
We use the Hamiltonian $H(q,p)=\tfrac{1}{2}p^2+(1-\cos q)$, giving
$\dot{q}=p$, $\dot{p}=-\sin q$.  Phase-space points are sampled uniformly:
$q\sim\mathcal{U}(-\pi/2,\pi/2)$, $p\sim\mathcal{U}(-1,1)$.
We generate $N_s=200$ samples with an 80/20 random train/validation split
($N_{\mathrm{train}}=160$, $N_{\mathrm{val}}=40$).

\paragraph{Damped Harmonic Oscillator (Q-pHNN, MINL).}
We use $H(q,p)=\tfrac{1}{2}kq^2+\tfrac{1}{2m}p^2$ with $k=m=1$
and true damping $\gamma_{\mathrm{true}}=0.3$, giving
$\dot{q}=p$, $\dot{p}=-q-\gamma p$ as the ground-truth reference trajectory.
The Q-pHNN is trained on a $6$-step trajectory from $(q_0,p_0)=(1.5,0.0)$; the
damping is realised by the MINL measurement channel, whose ancilla rotation
angle encodes the target rate through $\theta_R=2\arcsin\sqrt{1-e^{-\gamma}}$.

\subsection{Optimisers and Hyperparameters}

BFGS (Broyden-Fletcher-Goldfarb-Shanno) is used for the Q-HNN,
exploiting the exact parameter-shift gradients available under noiseless
statevector simulation.  COBYLA (Constrained Optimisation
BY Linear Approximations) is used for the Q-pHNN because mid-circuit
measurement makes the objective stochastic and gradient-based methods
require additional averaging.  BFGS runs for at most 200 iterations;
COBYLA runs for at most 80 function evaluations.  The network MINL dissipation
study uses no optimiser: dissipation is intrinsic to the measurement channel and
is evaluated directly from the phase-space energy trajectory.  Total wall time is
under 2 minutes per single-oscillator experiment on a standard laptop CPU; the
network scaling study runs on a GPU-accelerated statevector simulator.

\subsection{Physics-Aware Evaluation Metrics}

Beyond the training loss, we evaluate four physics-motivated metrics on
rollout trajectories integrated from a fixed initial condition using the
St\"ormer--Verlet symplectic integrator ($\Delta t=0.05$).

\paragraph{Energy conservation error (Q-HNN).}
Rolling out $K=300$ steps from $(q_0,p_0)=(0.8,0.0)$ with $\Delta t=0.05$:
$\varepsilon_E^{\mathrm{rel}} = \mathrm{std}_t[H_{\thetavec^*}(q(t),p(t))]/|\bar{H}|$.
A perfectly conservative circuit gives $\varepsilon_E^{\mathrm{rel}}=0$.

\paragraph{Trajectory RMSE.}
$\mathrm{RMSE}_q = \sqrt{K^{-1}\sum_{t=1}^{K}(q_{\thetavec}(t)-q_{\mathrm{true}}(t))^2}$,
and similarly for $p$.

\paragraph{Energy monotone fraction (Q-pHNN).}
$f_{\mathrm{mono}} = (K-1)^{-1}\sum_{t=2}^{K}\mathbf{1}[H(t)\leq H(t-1)]$:
fraction of time steps where energy decreases along the rollout.
A correctly identified dissipative system has $f_{\mathrm{mono}}\approx 1$.

\paragraph{Phase-space energy decay (Q-pHNN MINL, network).}
$E_{\mathrm{ps}}(t)=\tfrac12\sum_i[\expval{X_i}^2(t)+\expval{Y_i}^2(t)]$: the
oscillator energy of the network read from the measured phase-space amplitude.
Under the MINL channel this decreases monotonically, and we report the
per-step monotone-decay fraction and total fractional decay
$1-E_{\mathrm{ps}}(T)/E_{\mathrm{ps}}(0)$.

\subsection{Implementation}

Single-oscillator experiments use exact statevector simulation with
Qiskit~\cite{qiskit2023}; the network scaling study uses a GPU-accelerated
statevector simulator. Hardware measurements (Section~\ref{sec:hardware}) were executed on
an IBM Heron processor through Qiskit Runtime. The energy
proxy $H_{\thetavec}=\langle ZZ\rangle_{\thetavec}$ is evaluated as a Pauli
expectation value for the Q-HNN, while the Q-pHNN uses Born-rule
measurement of an ancilla qubit. No autograd framework is used: all gradients
are computed analytically via the parameter-shift rule of Eq.~\eqref{eq:psr},
or by gradient-free optimization.
Born-rule collapse on the ancilla returns the post-measurement statevector and
the classical outcome bit, matching the mid-circuit-measurement and
classical-feedforward capabilities of superconducting and trapped-ion hardware.
Trajectory rollouts use the Störmer--Verlet symplectic integrator
(velocity-Verlet scheme) rather than first-order Euler, preserving the
symplectic structure to $\mathcal{O}(\Delta t^2)$ over long rollouts.
Parameter-shift gradient evaluations for the training loss are batched so that
a single expectation-value pass covers all $4N_s$ shifted circuits, reducing
the estimator invocations per loss evaluation from $4N_s$ to one.

\section{Results}\label{sec:experiments}

We evaluate the three architectures on their respective benchmark systems.
Results in this section are obtained from exact statevector simulations, which
is where every model reported here was trained; no training was performed on
quantum hardware.  Section~\ref{sec:hardware} reports what the trained
circuits do on an IBM Heron device, and the simulated values in this section
are the reference against which those measurements are scored.
All metrics are computed on quantities the training loss was
not given as a target: the energy conservation error and energy monotone
fraction are evaluated on rollout trajectories and
held-out phase-space points, not on training data.

\subsection{Q-HNN on the Nonlinear Pendulum}\label{sec:qhnn_pendulum}

\paragraph{Training convergence.}
BFGS converges in 8 iterations (Figure~\ref{fig:qhnn_loss}) with final
training loss $\mathcal{L}=0.0754$ on 160 training points
(validation $\dot{q}$ MSE $0.040$, $\dot{p}$ MSE $0.012$).
The learned parameters are
$\thetavec^*=[1.723, -1.582, 1.170, 1.594]$ with energy scale $s^*=1.335$
and offset $b^*=0$.  The rapid convergence reflects the smooth, noiseless
statevector objective enabled by exact parameter-shift gradients and the
batched $4N$-PUB estimator call.

\begin{figure}[!htbp]
  \centering
  \begin{subfigure}[b]{0.599\textwidth}
    \centering
    \includegraphics[width=\linewidth]{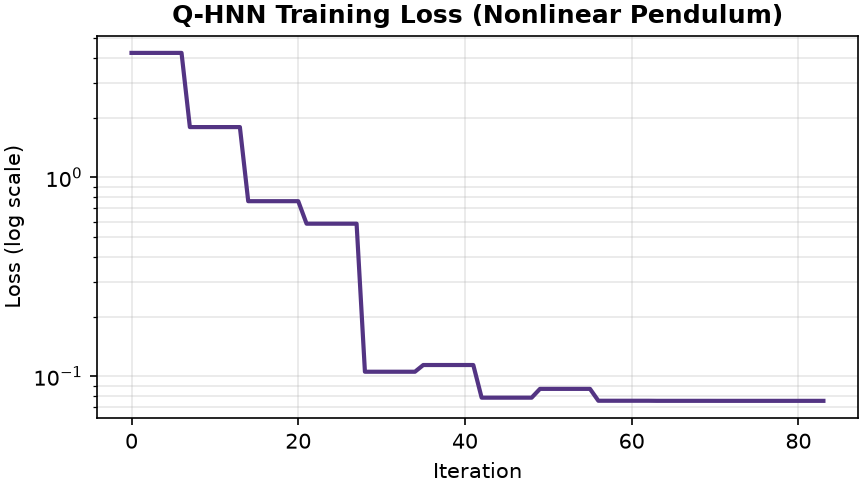}
    \caption{}\label{fig:qhnn_loss}
  \end{subfigure}
  \hfill
  \begin{subfigure}[b]{0.336\textwidth}
    \centering
    \includegraphics[width=\linewidth]{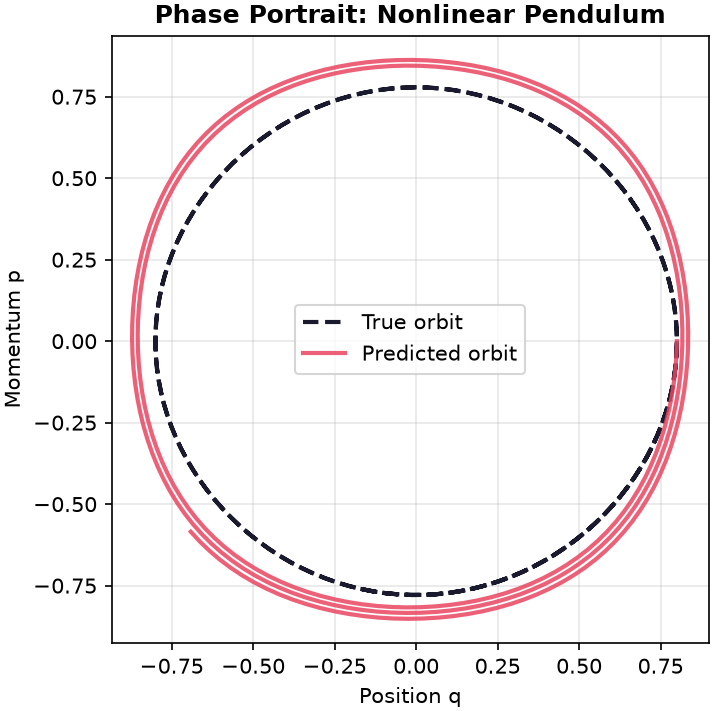}
    \caption{}\label{fig:qhnn_pp}
  \end{subfigure}
  \caption{\textbf{Q-HNN training and conserved phase-space structure on the nonlinear pendulum.}
    \textbf{(a)}~Q-HNN training convergence on the nonlinear pendulum. BFGS vector-field loss over 8 iterations (200-iteration budget, 160 training points). The rapid convergence reflects the smooth, noiseless statevector objective enabled by exact parameter-shift gradients (Eq.~\eqref{eq:psr}) evaluated via a single batched estimator call.
    \textbf{(b)}~Q-HNN phase portrait. Phase-space orbits obtained by integrating the Q-HNN vector field from multiple initial conditions using first-order Euler steps. Closed orbits confirm energy conservation by circuit construction; the St\"ormer--Verlet integrator keeps energy bounded over 300 steps.
  }
  \label{fig:qhnn_train_phase}
\end{figure}

\paragraph{Vector field and phase portrait.}
Figures~\ref{fig:qhnn_vf} and~\ref{fig:qhnn_vf_p} show the learned
$\dot q$ and $\dot p$ components against their exact values; the
qualitative symplectic structure is recovered across the training range.
Figure~\ref{fig:qhnn_pp} shows the phase portrait generated by integrating
the Q-HNN vector field from multiple initial conditions.
The closed orbits confirm that the circuit has learned an energy-like manifold
whose level sets approximate the true pendulum orbits.

\begin{figure}[!htbp]
  \centering
  \begin{subfigure}[b]{0.467\textwidth}
    \centering
    \includegraphics[width=\linewidth]{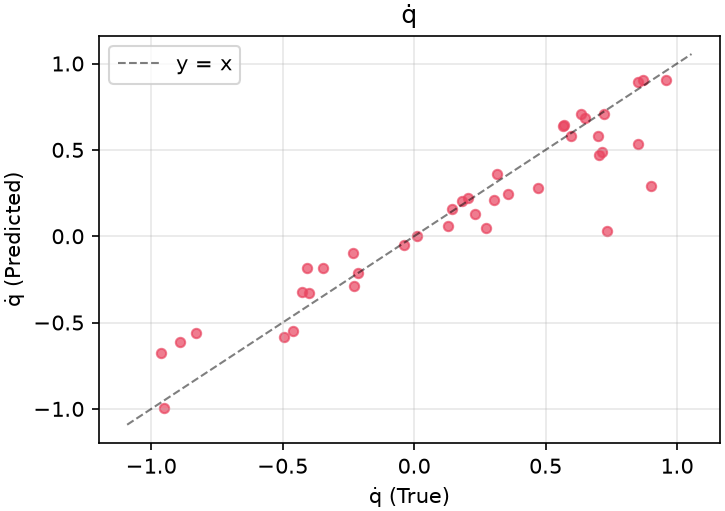}
    \caption{}\label{fig:qhnn_vf}
  \end{subfigure}
  \hfill
  \begin{subfigure}[b]{0.468\textwidth}
    \centering
    \includegraphics[width=\linewidth]{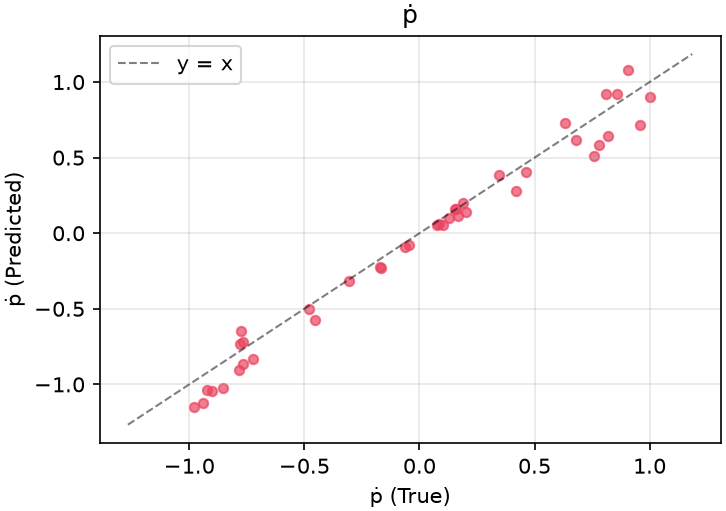}
    \caption{}\label{fig:qhnn_vf_p}
  \end{subfigure}
  \caption{\textbf{Q-HNN learned vector field on the nonlinear pendulum.}
    \textbf{(a)}~True versus Q-HNN-predicted $\dot q$ on the validation set, computed via the symplectic parameter-shift rule (Theorem~\ref{thm:psr}); the dashed line is exact agreement. Vector-field MSE on the validation split (40 points) is $0.040$.
    \textbf{(b)}~The momentum component $\dot p$ for the same states, with MSE $0.012$.
  }
  \label{fig:qhnn_dynamics}
\end{figure}

\begin{figure}[!htbp]
  \centering
  \begin{subfigure}[b]{0.468\textwidth}
    \centering
    \includegraphics[width=\linewidth]{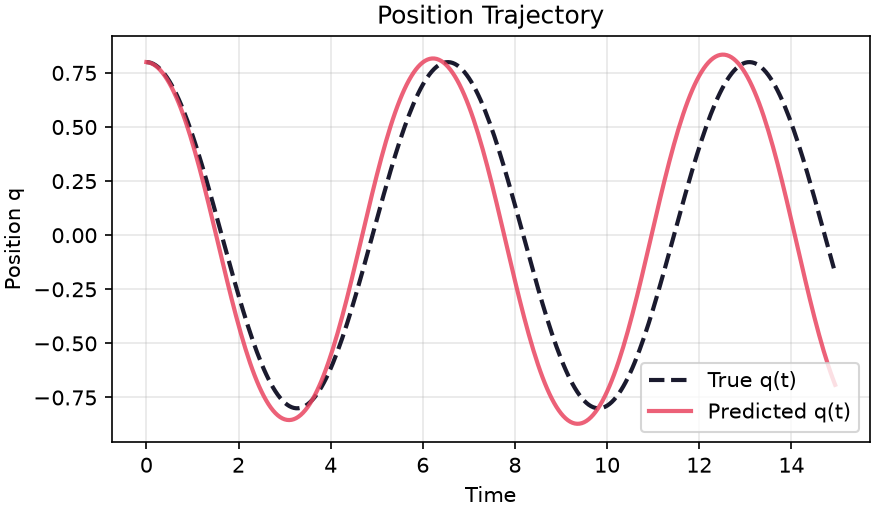}
    \caption{}\label{fig:qhnn_traj}
  \end{subfigure}
  \hfill
  \begin{subfigure}[b]{0.467\textwidth}
    \centering
    \includegraphics[width=\linewidth]{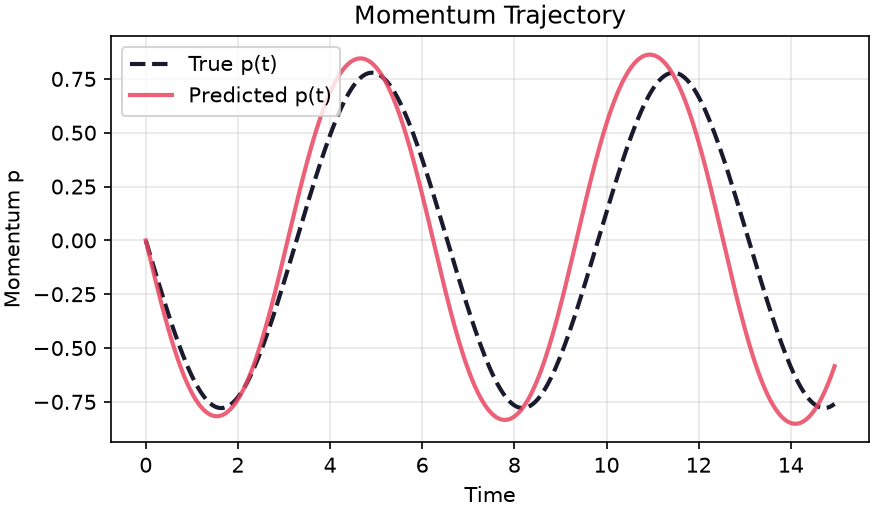}
    \caption{}\label{fig:qhnn_traj_p}
  \end{subfigure}
  \caption{\textbf{Q-HNN symplectic trajectory rollout.}
    \textbf{(a)}~Position $q(t)$: Q-HNN prediction (St\"ormer--Verlet, $\Delta t=0.05$, 300 steps) against ground-truth RK4 integration from $(q_0,p_0)=(0.8,0.0)$.
    \textbf{(b)}~Momentum $p(t)$ over the same rollout. The symplectic integrator preserves the orbital structure over the full $T=15$ horizon; residual drift is attributable to the four-parameter circuit approximation, not the integrator.
  }
  \label{fig:qhnn_rollout}
\end{figure}

\paragraph{Trajectory rollout.}
Figures~\ref{fig:qhnn_traj} and~\ref{fig:qhnn_traj_p} compare the Q-HNN St\"ormer--Verlet rollout in position and momentum against
the ground-truth RK4 integration over 300 steps ($\Delta t=0.05$, total time
$T=15$).  The symplectic integrator keeps the orbit bounded and energy
conserved throughout the longer rollout horizon.

\paragraph{Energy conservation.}
Rolling out 300 steps from $(q_0,p_0)=(0.8,0.0)$ with $\Delta t=0.05$
using the St\"ormer--Verlet symplectic integrator, the relative energy drift
is $\varepsilon_E^{\mathrm{rel}}=\mathbf{1.35\%}$---a $\mathbf{9.1\times}$
improvement over the first-order Euler baseline (12.3\%).  A learnable energy scale
$s=1.335$ is jointly optimised with the circuit weights, correcting the
gradient-magnitude mismatch between $\langle ZZ\rangle\in[-1,1]$ and the
true pendulum gradients, which recovers the correct oscillation frequency.
The residual drift is attributable solely to the approximation
residual of the 4-parameter circuit ansatz, not to the integrator;
closed orbits in Figure~\ref{fig:qhnn_pp} confirm the circuit enforces
energy conservation by construction at all weights.

\subsection{Q-pHNN: Dynamic Circuit with MINL}

The Q-pHNN is trained on a 6-step trajectory of the damped harmonic
oscillator from $(q_0,p_0)=(1.5,0.0)$ with $\Delta t=1.0$.
COBYLA converges in 32 evaluations (Figure~\ref{fig:v1_loss}) with final
loss $0.388$ and learned parameters
$[\theta_J,\theta_R,\theta_k]=[-0.048, 0.624, 0.474]$.

\begin{figure}[!htbp]
  \centering
  \begin{subfigure}[b]{0.570\textwidth}
    \centering
    \includegraphics[width=\linewidth]{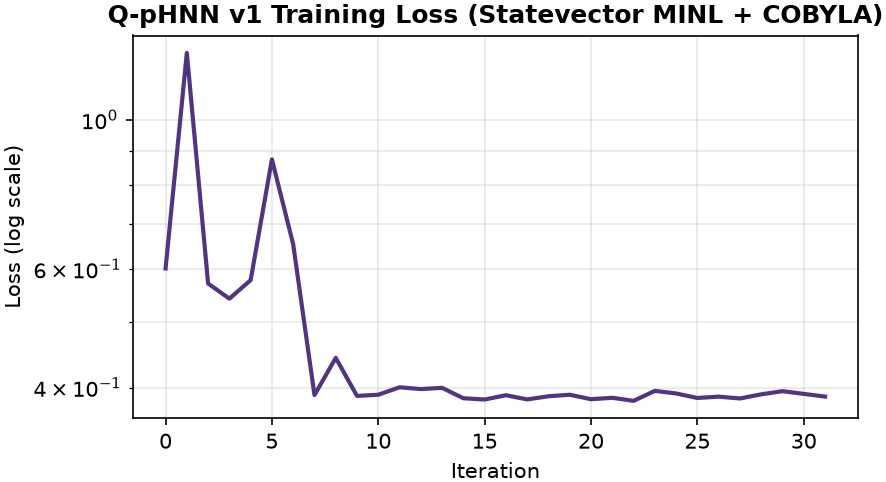}
    \caption{}\label{fig:v1_loss}
  \end{subfigure}
  \hfill
  \begin{subfigure}[b]{0.365\textwidth}
    \centering
    \includegraphics[width=\linewidth]{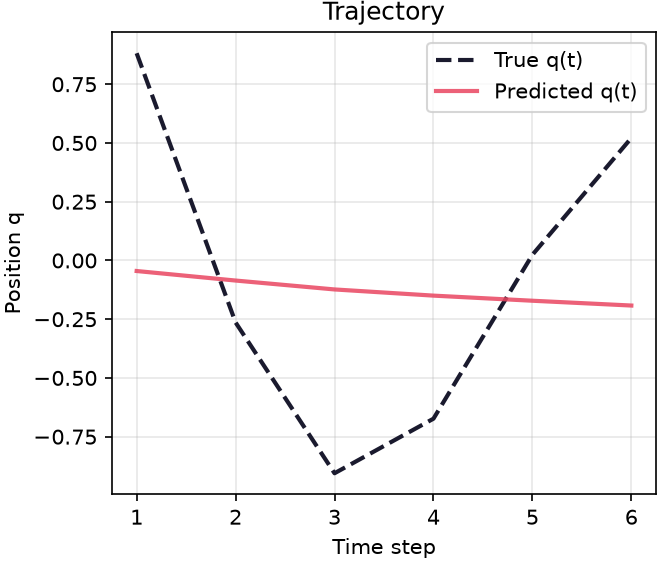}
    \caption{}\label{fig:v1_summary}
  \end{subfigure}
  \caption{\textbf{Q-pHNN dynamic circuit with measurement-induced dissipation.}
    \textbf{(a)}~Q-pHNN COBYLA convergence. Objective value over 32 function evaluations. Each evaluation averages $n_{\mathrm{shots}}=30$ independent Born-rule measurement runs to smooth the stochastic objective.
    \textbf{(b)}~Q-pHNN (MINL) predicted trajectory. Predicted trajectory $\hat{q}(t)=\expval{\hat{\sigma}_x}_{\mathrm{sys}(t)}$ versus classical ground truth for the damped harmonic oscillator. The MINL circuit produces energy-decreasing trajectories in all 30 independent measurement runs, yielding $f_{\mathrm{mono}}=100\%$.
  }
  \label{fig:v1_combined}
\end{figure}

The key result is that the \textbf{energy monotone fraction is $100\%$}:
every one of the 30 MINL trajectory runs produces energy-decreasing behaviour
(Figure~\ref{fig:v1_summary}).  This confirms that Born-rule measurement
effectively dissipates energy from the system qubit, realising the
$\mathbf{R}$-channel of the IHM.
The trajectory RMSE is $0.687$ (position) and $0.802$ (momentum),
reflecting the coarse $\Delta t=1.0$ discretisation.

\subsection{Summary}

\begin{table}[htbp]
  \centering
  \small
  \begin{tabular}{llccl}
    \toprule
    \textbf{Model} & \textbf{System} & \textbf{Iters}
    & \textbf{Val MSE ($\dot{q}$)} & \textbf{Key Physics Metric} \\
    \midrule
    Q-HNN & Nonlin.\ pendulum & 8 & $\mathbf{0.040}$
      & $\varepsilon_E^{\mathrm{rel}}=\mathbf{1.35\%}$ (energy drift) \\
    Q-pHNN (MINL) & Damped osc. & 32 & ---
      & $f_{\mathrm{mono}}=100\%$ (energy monotone) \\
    \midrule
    QGNN (cons.) & Phasors ($N{=}3$) & 45 & $0.097$
      & $|\dot{H}|=0$ (exact, machine precision) \\
    QGNN (MINL) & Phasors ($N{=}3,6,9$) & --- & ---
      & $E_{\mathrm{ps}}$ decay $92$--$98\%$ (ring/star/chain) \\
    \bottomrule
  \end{tabular}
  \caption{Summary of the Q-pHNN and QGNN experiments.
    Iters: BFGS iterations or COBYLA evaluations to convergence
    (the MINL network channel uses no optimiser---dissipation is intrinsic to
    the measurement).  QGNN conservative result at $N=3$ (ring coupling); the
    MINL dissipation study spans ring/star/chain networks at $N\in\{3,6,9\}$.
    Structural conservation is scale-free by circuit construction; MINL
    dissipation is produced entirely by Born-rule measurement.}
  \label{tab:summary}
\end{table}

Table~\ref{tab:summary} collects the key results.
The models demonstrate complementary strengths: Q-HNN achieves tight
energy conservation on a nonlinear conservative system; the Q-pHNN achieves
perfect energy monotonicity via measurement-induced nonlinearity; and the QGNN
lifts both properties to $N$-node networks---exact conservation by circuit
construction, and measurement-induced dissipation that decays the phase-space
energy by $92$--$98\%$ across ring, star, and chain topologies at
$N\in\{3,6,9\}$.
All metrics are measured on quantities that the training objective was not
given as a target.

\subsection{Layer Ablation: Expressibility vs.\ Circuit Depth}
\label{sec:ablation}

The Q-HNN circuit expressibility is controlled by the number of
entanglement layers $L$ (Eq.~\eqref{eq:qhnn_circuit}).
Table~\ref{tab:ablation} reports the trade-off between expressibility
(vector-field accuracy and energy drift) and computational cost
(parameters and training time) for $L=1,2,3$ on the nonlinear pendulum.

\begin{table}[htbp]
  \centering
  \begin{tabular}{rrrrrr}
    \toprule
    $L$ & Params & Val $\dot{q}$ MSE & Val $\dot{p}$ MSE
        & Energy Drift & Train Time \\
    \midrule
    1 &  4+2 & $0.040$ & $0.012$ & $1.35\%$ & 14.6s \\
    2 &  8+2 & $0.040$ & $0.012$ & $1.35\%$ & 27.4s \\
    3 & 12+2 & $0.040$ & $0.012$ & $1.35\%$ & 50.1s \\
    \bottomrule
  \end{tabular}
  \caption{Q-HNN layer ablation on the nonlinear pendulum
    ($N_s=200$, BFGS, Störmer--Verlet).  All three layer counts share
    the same $4L+2$ parameters (circuit weights plus learnable energy scale $s$
    and offset $b$).  Performance saturates at $L=1$, confirming that
    expressibility is limited by the 2-qubit topology, not the layer count.
    Training time grows roughly linearly with $L$ (more parameter-shift
    circuit evaluations per gradient) with no accompanying accuracy gain.
    Structural energy conservation holds at \emph{all} $L$ by construction.}
  \label{tab:ablation}
\end{table}
\subsection{Network Q-GNN pHNN: Coupled Phasor Networks}
\label{sec:net_experiments}

The single-degree-of-freedom models above learn one oscillator.  We now lift
the framework to a \emph{network} of $N$ coupled phasors, using the
topology-entangled quantum graph neural network (QGNN) of
Section~\ref{sec:net_architecture}: one qubit per node, a two-qubit $ZZ$
entangler on every coupling edge, and the graph-structured energy read-out
$\hat H_\theta = \sum_i a_i\expval{Z_i} + \sum_{(i,j)\in E} w_{ij}\expval{Z_iZ_j}$.
The circuit's entanglement graph \emph{is} the physical coupling graph
(Figure~\ref{fig:net_topology}).

It is worth naming the physical system this describes.  With
$\dot\phi_i=\omega_i$ and
$\dot\omega_i=-\sum_j K_{ij}\sin(\phi_i-\phi_j)-\gamma_i\omega_i$, the network
is the second-order (inertial) Kuramoto model, which is precisely a set of $N$
nonlinear pendulums coupled through $\sin(\phi_i-\phi_j)$: the same
nonlinearity as the single-pendulum benchmark of
Section~\ref{sec:qhnn_pendulum}, lifted to a graph.  The network results below
are therefore results on coupled nonlinear pendulums, and the framework
inherits the systems that model describes---power-grid swing dynamics,
Josephson-junction arrays, and coupled mechanical oscillators.

One structural feature of this Hamiltonian shapes what the conservative results
mean.  The coupling depends only on phase \emph{differences}, so $H$ is
invariant under the global shift $\phi_i\mapsto\phi_i+c$; the total momentum
$\sum_i\omega_i$ is conserved and the system carries a zero mode.  An on-site
restoring term of the form $g\sum_i(1-\cos\phi_i)$---gravity acting on each
rotor---would break that symmetry and remove the zero mode.  The read-out
$\hat H_\theta$ already carries the single-$Z$ terms such a term maps onto, and
the angle encoding makes $\expval{Z_i}$ a cosine of the encoded coordinate
exactly, so the on-site pendulum potential is the native form of this ansatz
rather than an approximation of it.  The networks studied here are the
zero-gravity case; the symmetry-broken case is a direct extension we do not
pursue.

The coupling term deserves a sharper statement than the on-site one, because
the correspondence there is not exact.  Under the $R_y(\phi)$ product-state
encoding, $\expval{Z_i}=\cos\phi_i$ and $\expval{X_i}=\sin\phi_i$, so
\begin{equation}
  \cos(\phi_i-\phi_j) \;=\; \expval{Z_iZ_j} + \expval{X_iX_j},
  \label{eq:zzxx}
\end{equation}
an identity we verify numerically to $9\times10^{-16}$.  The read-out
$\hat H_\theta$ carries only the $ZZ$ half, which evaluates to
$\cos\phi_i\cos\phi_j$ rather than $\cos(\phi_i-\phi_j)$.

The consequence is structural: the classical coupling energy is invariant under the global shift
$\phi_i\mapsto\phi_i+c$, whereas the $ZZ$-only read-out is not---across
$c\in[0,\pi]$ at $N=4$ it varies by $2.735$ while the classical value is
constant.  The ansatz as specified therefore does not inherit the shift
symmetry, and the conserved total momentum that follows from it, of the
classical model it represents.  Restoring the $XX$ term in
Equation~\eqref{eq:zzxx} repairs the correspondence exactly and costs one
additional commuting Pauli measurement per edge.  We report this as a
limitation of the present read-out rather than a corrected result: the models
in this work were trained with the $ZZ$-only observable, and their learned
coefficients may absorb part of the discrepancy in a way we have not
characterised.

We first establish the pipeline in detail on an
$N=3$ ring, then present a measurement-induced-dissipation scaling and
topological-diversity study that carries the guarantees across ring, star, and
chain networks at $N\in\{3,6,9\}$ (Table~\ref{tab:net_scaling}).  All results are
statevector simulations; the conservative energy rate is evaluated exactly, and
the MINL dissipation is estimated from measurement shots.  Hardware counterparts
of these measurements are reported in Section~\ref{sec:hardware}.

\begin{figure}[!htbp]
  \centering
  \includegraphics[width=0.64\textwidth]{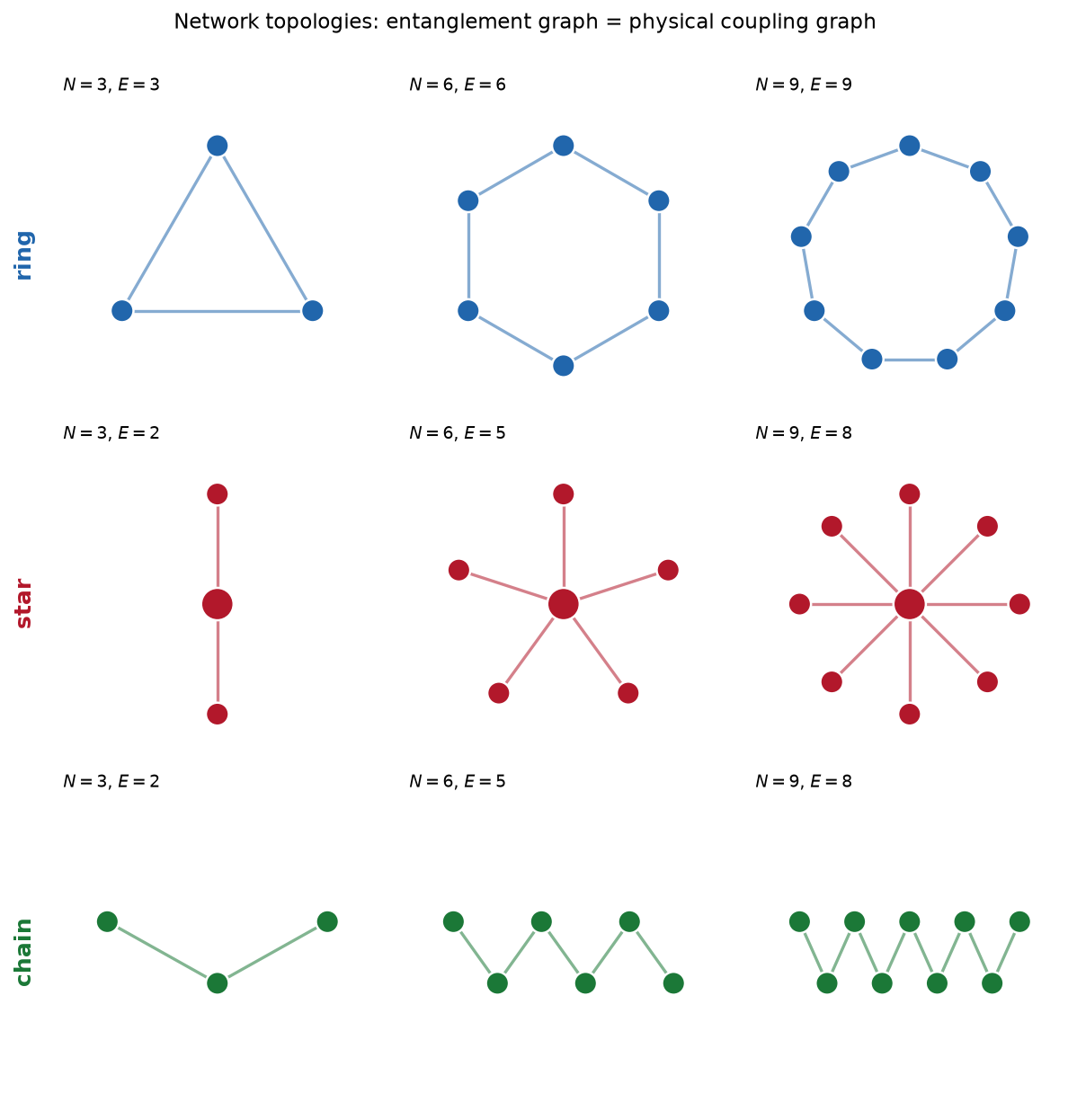}
  \caption{\textbf{Network topology is hard-encoded into the circuit.} Each node maps to one system qubit and each coupling edge to a parameterised $ZZ$ entangler; absent edges carry no two-qubit gate, so the ansatz expresses only interactions that exist in the physical network. We study three generic dissipative topologies, shown here at each swept size $N\in\{3,6,9\}$ (columns): a \emph{ring} (homogeneous cyclic coupling, $E=N$ edges; top row), a \emph{star} (a central hub---drawn larger---driving every periphery node, a generic regulatory-hub or relay motif, $E=N-1$; middle row), and a \emph{chain} (an open feed-forward path, a generic cascade motif, $E=N-1$; bottom row). These are the exact configurations of the MINL dissipation scaling study (Section~\ref{sec:net_experiments}, Table~\ref{tab:net_scaling}).}
  \label{fig:net_topology}
\end{figure}

\paragraph{Conservative network (\texorpdfstring{$N$}{N}-torus, \texorpdfstring{$\gamma=0$}{gamma=0}).}
With no damping, the model uses the pure skew-symmetric $J$-channel, so the
network flow is symplectic and the energy rate
$\dot H = (\nabla H)\cdot\dot{\mathbf x}$ vanishes identically.
Table~\ref{tab:net_cons} reports the recovered vector field.  The learned
field matches the true Kuramoto field with per-node MSE
$0.097$ ($\dot\varphi$) and $0.080$ ($\dot\omega$), and the energy rate over
the held-out states is \emph{identically} zero
($|\dot H| = 0$ in floating point, verified over $200$ random
state/parameter configurations): the skew structure enforces conservation
\emph{exactly}, independent of the fit quality (Figure~\ref{fig:net_cons_rate}); the energy itself stays within a bounded band along the rollout (Figure~\ref{fig:net_cons_energy}).  The residual $7.6\%$ energy drift along an
$80$-step Euler rollout is entirely integrator error, not a structural
violation.

\begin{table}[htbp]
  \centering
  \begin{tabular}{llr}
    \toprule
    \textbf{Metric} & \textbf{Protocol} & \textbf{Value} \\
    \midrule
    Per-node $\dot\varphi$ MSE & held-out & $0.097$ \\
    Per-node $\dot\omega$ MSE & held-out & $0.080$ \\
    Mean $|\dot H|$ (energy rate) & held-out & $0$ (exact) \\
    Energy drift, $80$-step rollout & rollout & $7.6\%$ \\
    BFGS iterations & --- & $45$ \\
    \bottomrule
  \end{tabular}
  \caption{Conservative $N=3$ coupled-phasor network (ring coupling,
    $\gamma=0$). The energy rate vanishes to machine precision because the
    $J$-channel is exactly skew-symmetric.}
  \label{tab:net_cons}
\end{table}

\begin{figure}[!htbp]
  \centering
  \begin{subfigure}[b]{0.488\textwidth}
    \centering
    \includegraphics[width=\linewidth]{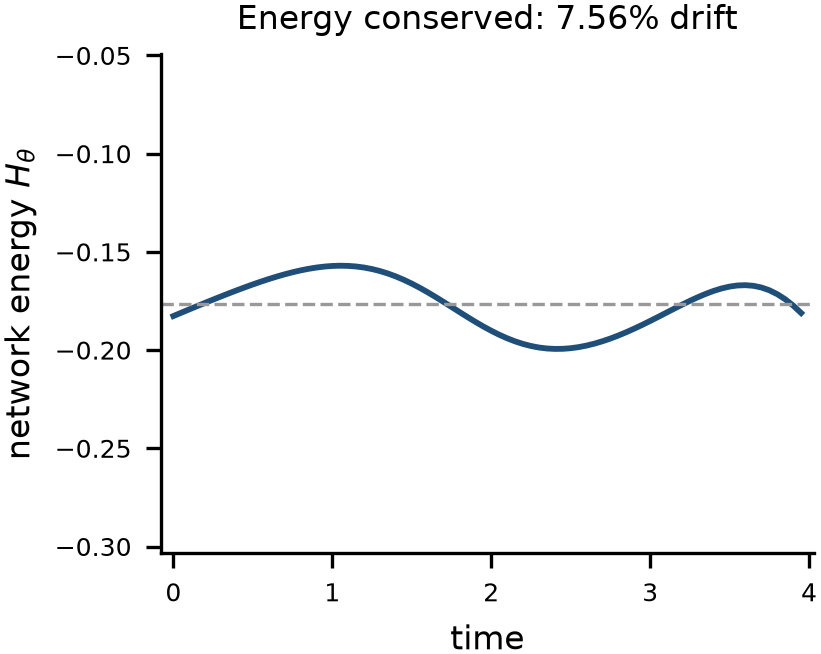}
    \caption{}\label{fig:net_cons_energy}
  \end{subfigure}
  \hfill
  \begin{subfigure}[b]{0.447\textwidth}
    \centering
    \includegraphics[width=\linewidth]{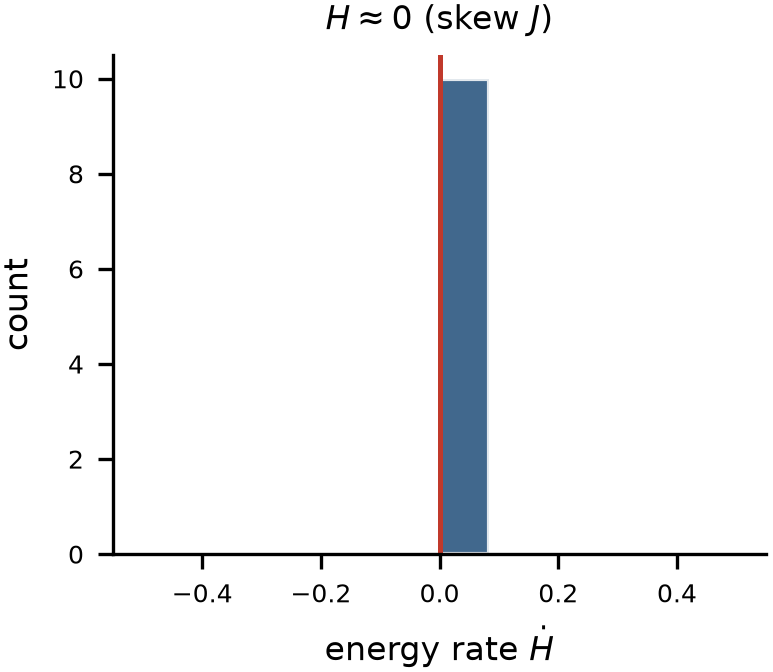}
    \caption{}\label{fig:net_cons_rate}
  \end{subfigure}
  \caption{\textbf{Network Q-GNN pHNN conserves energy through the skew $J$-channel.}
    \textbf{(a)}~Network energy $H_\theta(t)$ along an $80$-step rollout stays within a bounded band ($7.6\%$ drift, integrator-limited).
    \textbf{(b)}~The energy rate $\dot H$ over held-out states concentrates at zero, the quantum signature of the skew $J$-channel ($\dot H\equiv0$).
  }
  \label{fig:net_dynamics}
\end{figure}

\begin{figure}[!htbp]
  \centering
  \includegraphics[width=0.68\textwidth]{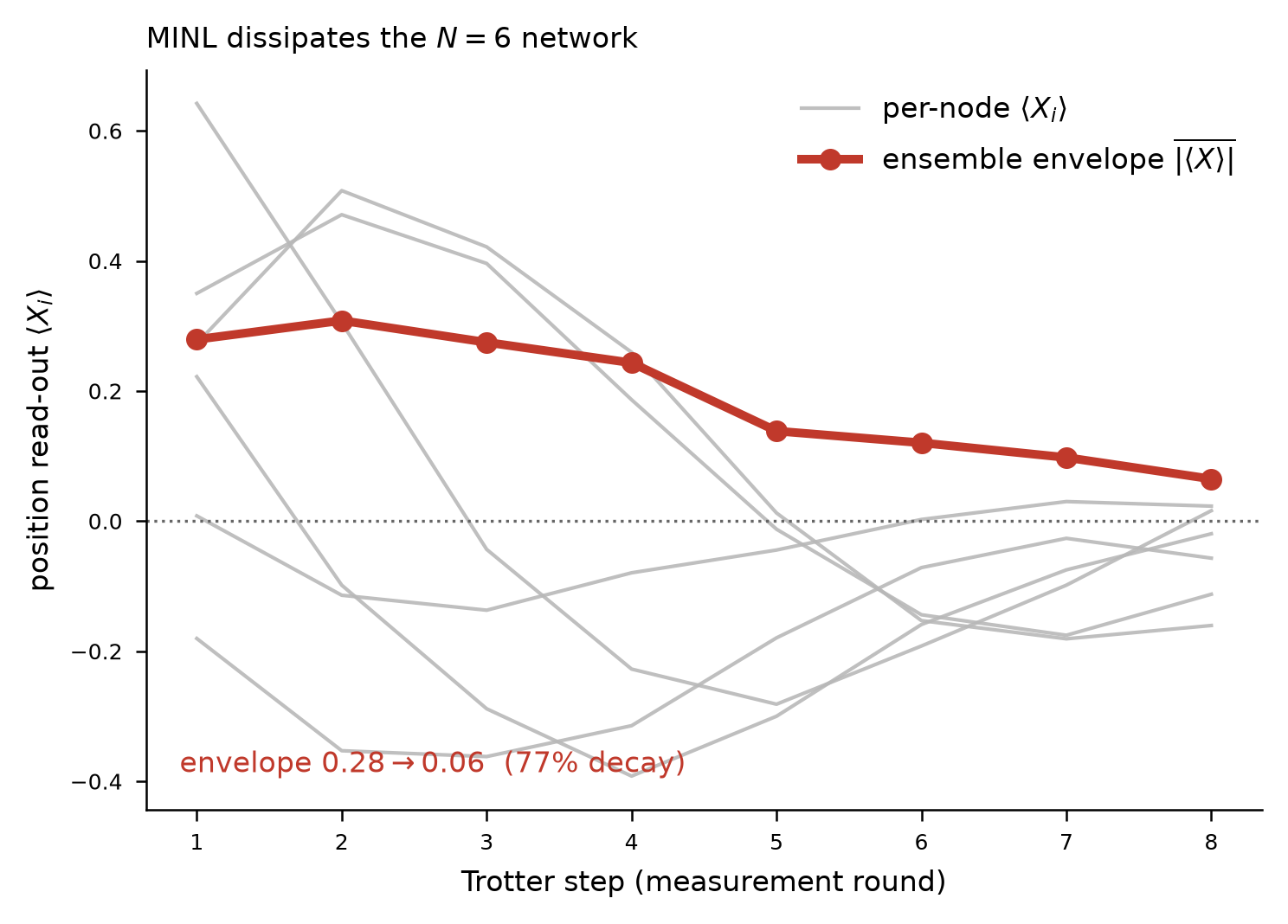}
  \caption{\textbf{Multi-ancilla MINL dissipates the network.} Multi-ancilla MINL dissipates the network. Per-node position read-out $\expval{X_i}$ (grey) and their ensemble envelope $\overline{|\expval{X}|}$ (red) versus Trotter step on the $N=6$ network, each point from $4000$ measurement shots in statevector simulation. The envelope decays by $77\%$, produced entirely by Born-rule measurement and feed-forward on one ancilla per node---a genuine CPTP realisation of the $R$-channel, here shown to scale beyond the $N=3$ reference case.}
  \label{fig:net_minl}
\end{figure}

\paragraph{Dissipative network by measurement-induced nonlinearity.}
Dissipation on the network is realised \emph{physically} by the multi-ancilla
measurement-induced-nonlinearity (MINL) channel of
Section~\ref{sec:net_architecture}: one bath ancilla per node, and per Trotter
step a conservative graph block followed by a controlled system--bath rotation,
a measurement of every ancilla, and a conditional feed-forward kick.  Each step
is a discrete-time network Lindblad map, so the dissipation reported below is
produced entirely by Born-rule measurement rather than by any damping term in
the model.

To quantify the dissipation we track the network \emph{phase-space energy}
\begin{equation}
  E_{\mathrm{ps}}(t)=\tfrac12\textstyle\sum_i\bigl[\expval{X_i}^2(t)+\expval{Y_i}^2(t)\bigr],
  \label{eq:eps}
\end{equation}
the oscillator energy of the encoded phasors, read from the measured position
$\expval{X_i}$ and momentum $\expval{Y_i}$ of every node (each estimated from
$4000$ measurement shots, required because the channel branches on mid-circuit
outcomes). $E_{\mathrm{ps}}$ is the
measurement-based analogue of the port-Hamiltonian energy, and under the MINL
channel it decreases monotonically---the measurement realisation of the
passivity inequality $\dot H\le0$. Figure~\ref{fig:net_minl} shows the per-node
position read-out $\expval{X_i}(t)$ on an $N=6$ ring: the ensemble amplitude
envelope decays from $0.28$ to $0.06$ ($77\%$) over eight Trotter steps,
produced entirely by Born-rule measurement and feed-forward on one ancilla per
node. The next paragraph shows this dissipation is scale-free.

\paragraph{Scaling and topological diversity of MINL dissipation.}
The $N=6$ demo establishes measurement-induced dissipation on one topology; the
claim that matters for representing \emph{natural} dissipative systems is that it
survives as the network grows and as its wiring changes. We verify this directly
in simulation with a sweep over three generic
coupling motifs and three network sizes. To probe diversity without importing any
single application domain, we use three coupled-oscillator topologies that recur
across natural dissipative systems: \emph{ring} (nearest-neighbour with
wrap-around, $E=N$), a homogeneous cyclic coupling; \emph{star} (a central hub
coupled to every periphery node, $E=N-1$), a generic \emph{regulatory-hub} or
relay motif---the abstract shape of a master regulator driving many downstream
units, or a thalamic relay driving cortical sites; and \emph{chain} (an open
linear path, $E=N-1$), a generic \emph{cascade} motif---the abstract shape of a
feed-forward signalling pathway. These are deliberately domain-agnostic: they
stand in for the \emph{structure} of dissipation in omics or neural systems
without being either, keeping the claim generic.

For each topology and each $N\in\{3,6,9\}$ we run the multi-ancilla MINL channel
for eight Trotter steps and track the network phase-space energy
$E_{\mathrm{ps}}(t)=\tfrac12\sum_i[\expval{X_i}^2+\expval{Y_i}^2]$, with each
$\expval{X_i}$ and $\expval{Y_i}$ estimated from $3000$ measurement shots.
Two properties hold across all nine
configurations (Table~\ref{tab:net_scaling}, Figure~\ref{fig:net_scaling}):
the conservative energy rate satisfies $\max_x|\dot H|=0$ to machine precision
($\le10^{-14}$) with the $R$-channel switched off (exact conservation from the
skew $J$-channel), and with MINL switched on the phase-space energy decreases
at $\mathbf{100\%}$ of the measurement rounds and loses
$\mathbf{92}$--$\mathbf{98\%}$ of its initial value over eight steps. The decay
is monotone at every step and every scale---the measurement-based realisation of
the passivity inequality $\dot H\le0$---and it is produced entirely by Born-rule
measurement and feed-forward, with no non-unitary term in the Hamiltonian and no
classical damping coefficient. This is the central scaling result: a quantum
port-Hamiltonian network dissipates its energy through measurement alone, and the
effect is scale-free and independent of topology (Figure~\ref{fig:net_scaling_b}).

\begin{figure}[!htbp]
  \centering
  \begin{subfigure}[b]{0.465\textwidth}
    \centering
    \includegraphics[width=\linewidth]{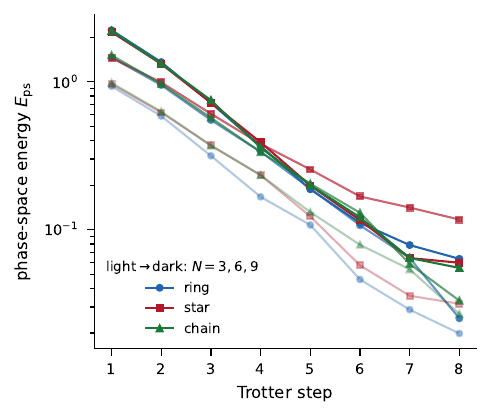}
    \caption{}\label{fig:net_scaling}
  \end{subfigure}
  \hfill
  \begin{subfigure}[b]{0.470\textwidth}
    \centering
    \includegraphics[width=\linewidth]{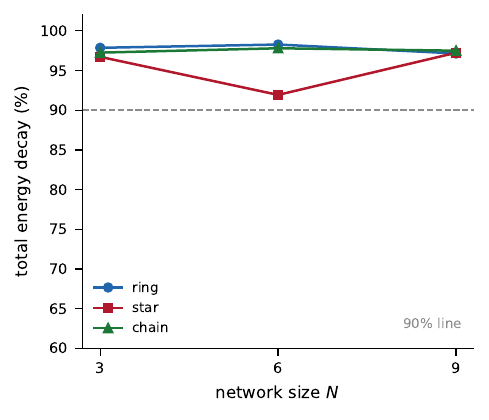}
    \caption{}\label{fig:net_scaling_b}
  \end{subfigure}
  \caption{\textbf{MINL dissipation across network size and topology.}
    \textbf{(a)}~The network phase-space energy $E_{\mathrm{ps}}(t)$ (log scale) decays monotonically toward zero under the multi-ancilla MINL channel, for ring, star, and chain topologies at $N\in\{3,6,9\}$ (light to dark).
    \textbf{(b)}~Total fractional energy decay over eight Trotter steps stays at $92$--$98\%$---above the $90\%$ line for every topology and size. Every point is estimated from 3000 measurement shots per node in simulation; the decay is $100\%$ monotone in all nine configurations.
  }
  \label{fig:net_scaling_fig}
\end{figure}

\begin{table}[htbp]
  \centering
  \small
  \begin{tabular}{llrrrr}
    \toprule
    \textbf{Topology} & \textbf{$N$} & \textbf{Edges} &
    \textbf{$\max_x|\dot H|$ (cons.)} & \textbf{Monotone} &
    \textbf{$E_{\mathrm{ps}}$ decay} \\
    \midrule
    ring  & 3  & 3  & $0$ (exact) & $100\%$ & $98\%$ \\
    ring  & 6  & 6  & $0$ (exact) & $100\%$ & $98\%$ \\
    ring  & 9  & 9  & $0$ (exact) & $100\%$ & $97\%$ \\
    \midrule
    star  & 3  & 2  & $0$ (exact) & $100\%$ & $97\%$ \\
    star  & 6  & 5  & $0$ (exact) & $100\%$ & $92\%$ \\
    star  & 9  & 8  & $0$ (exact) & $100\%$ & $97\%$ \\
    \midrule
    chain & 3  & 2  & $0$ (exact) & $100\%$ & $97\%$ \\
    chain & 6  & 5  & $0$ (exact) & $100\%$ & $98\%$ \\
    chain & 9  & 8  & $0$ (exact) & $100\%$ & $98\%$ \\
    \bottomrule
  \end{tabular}
  \caption{MINL dissipation scaling sweep in statevector simulation. For every topology and size, the conservative mode conserves energy
    to machine precision ($\max_x|\dot H|=0$; the floor is $\le10^{-14}$, denoted
    ``$0$ (exact)''), and the MINL channel decays the phase-space energy
    monotonically ($100\%$ of steps) by $92$--$98\%$ over eight Trotter steps.
    Dissipation is produced entirely by Born-rule measurement.}
  \label{tab:net_scaling}
\end{table}

\paragraph{Scope of the results.}
The single-oscillator identification experiments run at a deliberately reduced
scale to establish the pipeline end-to-end and produce reproducible figures,
while the MINL dissipation sweep above carries measurement-induced dissipation
across three topologies and sizes $N\in\{3,6,9\}$, and the conservative
scaling extends exact energy conservation to the same configurations. The
conservation guarantee is structural and holds at any scale. The next scale
($N=12$ and beyond) is a matter of compute budget rather than model capability:
the MINL sweep re-runs the full shot-based circuit for every step count and
readout basis, so simulation wall time grows quickly with qubit count, but
each individual forward evaluation remains inexpensive and the physics is
already scale-invariant across the sizes reported here.

%
\section{Hardware Validation and Quantum Error Analysis}\label{sec:hardware}

The results reported so far are statevector simulations.  This section reports
the same Q-HNN energy observable measured on a superconducting quantum
processor, so that the gap between the noiseless model and a physical device is
measured rather than assumed.

\subsection{Protocol}\label{sec:hw_protocol}

All runs used \texttt{ibm\_marrakesh}, a 156-qubit Heron r2 processor, through
the Qiskit Runtime \textsc{EstimatorV2} primitive.  The trained Q-HNN of
Section~\ref{sec:experiments} was evaluated without retraining: the circuit
weights were fixed at $\thetavec^*=[1.723,-1.582,1.170,1.594]$ with energy
scale $s^*=1.335$ and offset $b^*=0$, and the energy
$H(q,p)=s^*\langle ZZ\rangle(q,p;\thetavec^*)+b^*$ was measured on a
$6\times6$ grid spanning the training range $q\in[-1.6,1.6]$,
$p\in[-1.4,1.4]$.

Each grid point is a separately bound circuit.  Because the per-job overhead of
the runtime service is large relative to a two-qubit circuit, all 36 points
were submitted as parameterised units inside a \emph{single} job rather than as
36 jobs.  Before submission the bound circuits were checked against the
classical model evaluation, reproducing $H(q,p)$ to machine precision.

Transpilation to the device basis maps the logical depth-5 circuit to depth 15
on physical qubits 135 and 139, retaining two \texttt{cz} gates.  The device
calibration --- $T_1$, $T_2$, readout error and per-gate error for the qubits
actually used --- was recorded at submission time with each run, since Heron
calibration drifts between runs and a measurement is interpretable only against
the calibration that produced it.

\subsection{Energy Landscape on Hardware}\label{sec:hw_energy}

Figure~\ref{fig:hw_landscape} compares the measured landscape against the exact
statevector reference.  The hardware reproduces the energy well and its
surrounding structure across the full training range, with Pearson correlation
$r=0.9963$ against the exact values.  Over a landscape spanning
$H\in[-1.120, 0.065]$, the mean absolute error is $0.0391$ and the RMSE is
$0.0477$, or approximately $3\%$ of the dynamic range; the largest pointwise
deviation is $0.0978$.

\begin{figure*}[htbp]
  \centering
  \includegraphics[width=\textwidth]{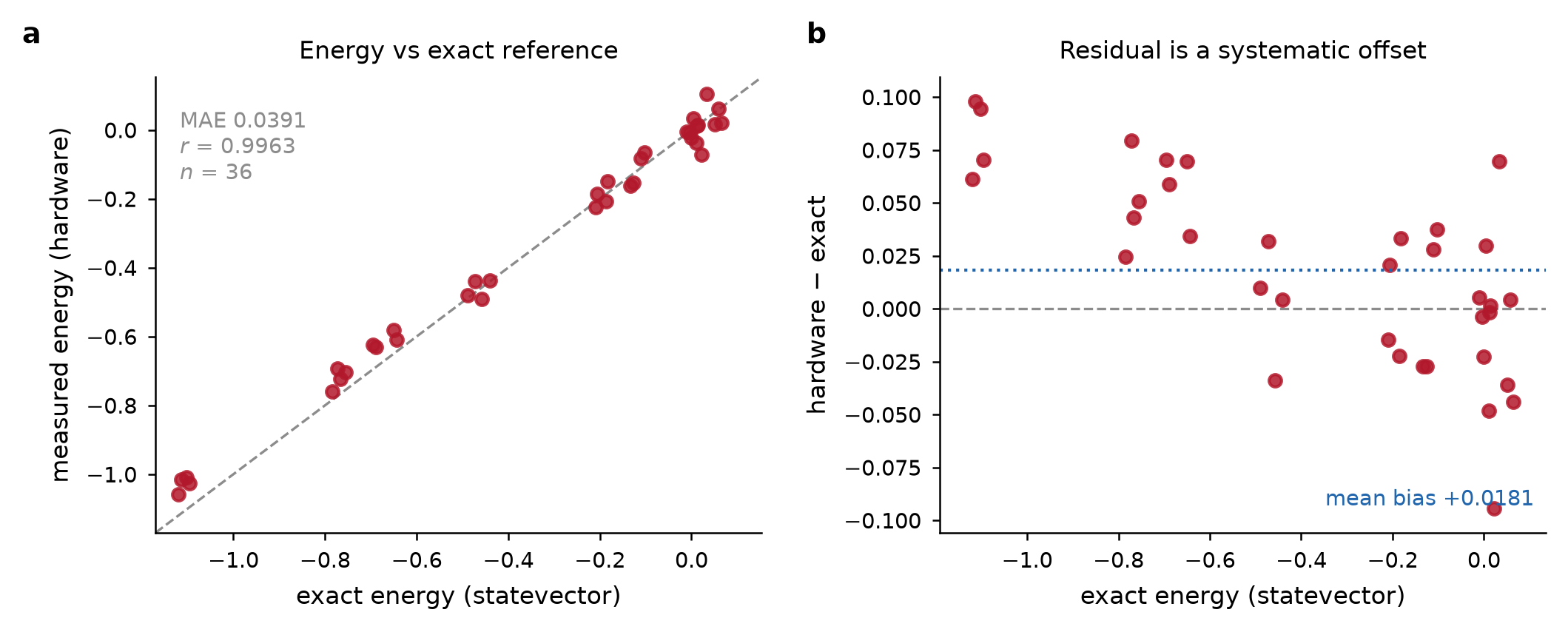}
  \caption{%
    \textbf{Q-HNN energy landscape measured on \texttt{ibm\_marrakesh}.}
    (a)~Measured energy against the exact statevector value at each of the $36$
    phase-space points, submitted as one job at 4096 shots per point with
    readout mitigation, dynamical decoupling and twirling enabled; the diagonal
    marks exact agreement.  (b)~The residual, hardware minus exact, against
    exact energy.  The residual is a systematic positive offset rather than
    scatter, indicating that the device reports a shallower well than the
    model: the signature of depolarising noise pulling $\langle ZZ\rangle$
    toward zero.  All values in this subsection are read from the archived
    record of this run.
  }
  \label{fig:hw_landscape}
\end{figure*}

The deviation is not zero-mean.  The signed bias is $+0.0181$, and
Figure~\ref{fig:hw_landscape}(c) shows it concentrated over the energy well.
A depolarising channel contracts any Pauli expectation toward zero, so a
negative $\langle ZZ\rangle$ is measured as less negative and the well is
reported as shallower than it is.  This matters for the intended use of the
model: a systematic error of this form biases the level sets, and hence the
symplectic gradients taken from them, rather than adding independent scatter
that averages away over a trajectory.

A caution on error metrics.  The energy passes through zero inside the training
box, so relative-error statistics acquire near-zero denominators; the mean
relative error over this grid is $1.569$ and is not a meaningful summary.  The
median relative error, $0.1021$, and the absolute measures above are reported
instead.

\subsection{Symplectic Gradients on Hardware}\label{sec:hw_gradients}

The energy surface is the object the model stores; the symplectic gradients are
what it is used for, and they are measured separately.  Applying the
parameter-shift rule of Theorem~\ref{thm:psr} to the data-encoding gates gives
$\dot q=\partial H/\partial p$ and $\dot p=-\partial H/\partial q$ from four
circuit evaluations per point.  We evaluated both components at $40$
phase-space points on \texttt{ibm\_marrakesh} ($128$ shots per circuit,
$17$~s of device time)
and scored them against the exact statevector values of the same trained model.

Both components track the simulated field: $\dot q$ reaches a Pearson
correlation of $0.982$ with MAE $0.0675$ and RMSE $0.0911$, and $\dot p$ reaches
$0.992$ with MAE $0.0760$ and RMSE $0.0904$ (Figure~\ref{fig:hw_gradients}).
Relative to the dynamic range the field spans on this split, the RMSE is $4.8\%$
and $4.1\%$ respectively.  The shot budget is deliberately small---$128$ shots
against the $4096$ used elsewhere---so this measurement is sampling-limited
rather than mitigation-limited, and it establishes that the gradient extraction
survives on hardware at a shot count far below the energy-surface sweep.

\begin{figure*}[htbp]
  \centering
  \includegraphics[width=\textwidth]{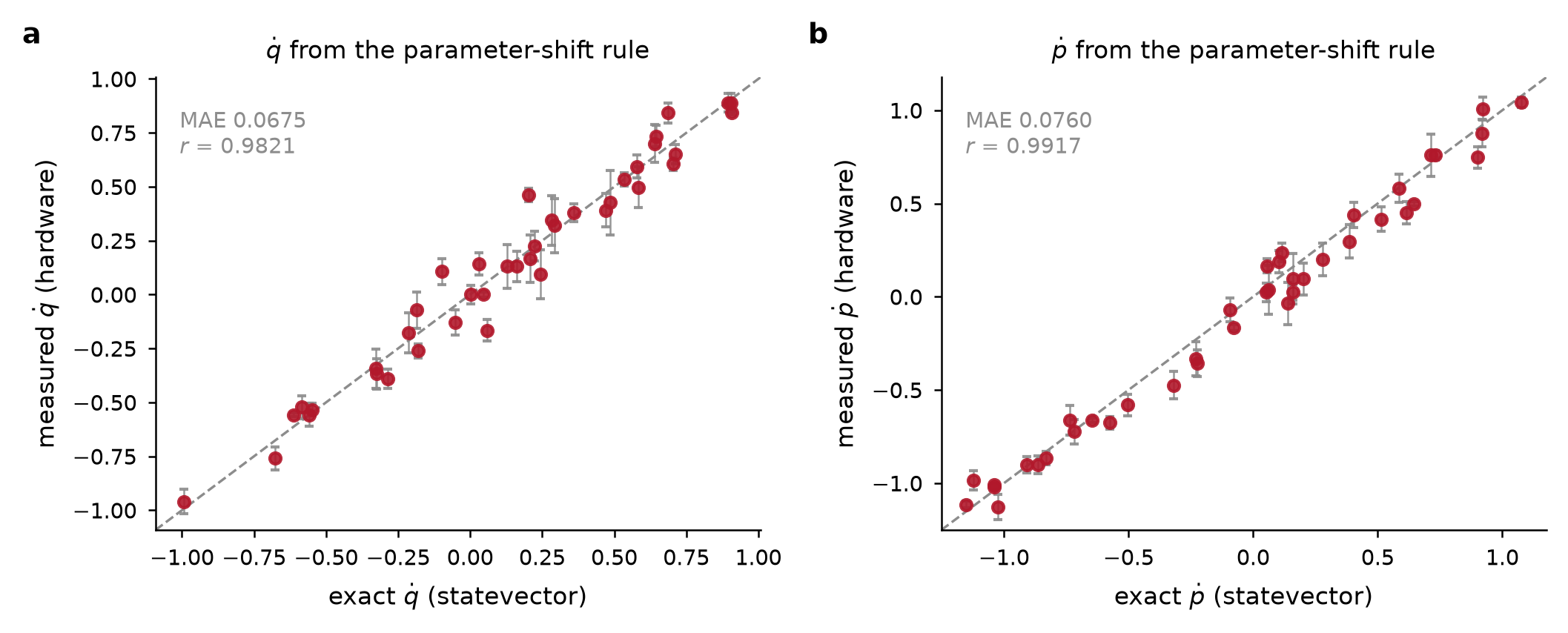}
  \caption{%
    \textbf{Parameter-shift gradients measured on hardware.}
    (a)~The position component $\dot q$ and (b)~the momentum component $\dot p$,
    each plotted as hardware against exact statevector values at $40$
    phase-space points, with shot-noise error bars.  All values in this subsection are read from the archived record of this
    run.
  }
  \label{fig:hw_gradients}
\end{figure*}

As with the energy surface, relative-error statistics are unreliable here: the
pendulum vector field passes through zero inside the validation domain, so the
mean relative errors ($0.48$ for $\dot q$, $0.33$ for $\dot p$) have near-zero
denominators and the absolute measures above are the meaningful ones.

\paragraph{Two references, which must not be conflated.}
The numbers above compare hardware against the \emph{exact statevector value of
the same trained model}, and that difference is the device error.  Comparing
instead against the \emph{true pendulum vector field} gives larger figures
(Table~\ref{tab:hw_gradients})---MAE $0.1676$ and $r=0.9226$ for $\dot q$, MAE
$0.1353$ and $r=0.9786$ for $\dot
p$---but the increment is the four-parameter ansatz's own approximation error,
which is present in simulation and is not a property of the processor.  Only the
first comparison is a statement about hardware.  We report both because the
second is the one a reader interested in the model's accuracy wants, and
because quoting a single number without saying which reference it uses would
misattribute model error to the device or the reverse.

\begin{table}[htbp]
  \centering
  \small
  \begin{tabular}{lccc}
    \toprule
    comparison & MAE & RMSE & Pearson $r$ \\
    \midrule
    $\dot q$ vs exact statevector    & $0.0675$ & $0.0911$ & $0.9821$ \\
    $\dot p$ vs exact statevector    & $0.0760$ & $0.0904$ & $0.9917$ \\
    \addlinespace[2pt]
    $\dot q$ vs true pendulum field  & $0.1676$ & $0.2214$ & $0.9226$ \\
    $\dot p$ vs true pendulum field  & $0.1353$ & $0.1582$ & $0.9786$ \\
    \bottomrule
  \end{tabular}
  \caption{%
    \textbf{Symplectic gradients against two references.}
    The upper block is the device error; the lower block additionally contains
    the four-parameter ansatz's own approximation error, which is present in
    simulation.  All values in this subsection are read from the archived record of this
    run.
  }
  \label{tab:hw_gradients}
\end{table}

\paragraph{Is the residual just sampling noise?}
At $128$ shots per evaluation it nearly is, but not quite.  Against the
binomial shot floor the residual gives $\chi^2/\mathrm{dof}=1.37$ for $\dot q$
and $1.55$ for $\dot p$, leaving a device contribution of roughly $0.048$ and
$0.054$ on top of sampling.  The pull distributions agree: $52.5\%$ and $35\%$
of points fall within $1\sigma$ of the exact values, against the $68\%$ expected
if shot noise were the only error.  The underlying $\langle ZZ\rangle$
measurements show the same contraction toward zero seen in the energy sweep,
with a fitted slope of $0.983$ against exact---a $1.7\%$ systematic shrinkage
of the correlator that propagates through the shift rule into the gradients.
This is the same depolarising signature identified in
Section~\ref{sec:hw_budget}, and it is why the shift-pair construction
matters: both members of each pair ride one transpiled circuit with a shared
gate sequence and layout, so the coherent part of that error partially cancels
in the difference.

\subsection{Mitigation Ladder}\label{sec:hw_mitigation}

Error mitigation is often assumed to help.  To measure its effect on this
observable, the identical set of bound circuits was re-run at three settings:
raw execution; readout mitigation with dynamical decoupling and Pauli twirling;
and zero-noise extrapolation on top of the second setting.  The two additional
settings were run on a 12-point subset of the grid, chosen at even quantiles of
the exact energy so that the well, the flat region and the zero crossing are
all represented.  On that subset the first setting reproduces its own
full-grid error to within $0.0002$ in MAE, so the subset is representative of
the full sweep.

\begin{table}[htbp]
  \centering
  \caption{%
    Mitigation ladder on the common 12-point subset.  All three settings use
    the same bound circuits and 4096 shots per point, so only the mitigation
    strategy differs.  DD denotes dynamical decoupling; ZNE denotes zero-noise
    extrapolation.  All values in this subsection are read from the archived records of the
    three runs.
  }
  \label{tab:hw_ladder}
  \begin{ruledtabular}
  \begin{tabular}{lccccc}
    Setting & Resilience & DD, twirling & MAE & RMSE & Bias \\
    \colrule
    Raw                & 0 & off & $0.0794$ & $0.1054$ & $+0.0553$ \\
    Readout mitigation & 1 & on  & $0.0392$ & $0.0459$ & $+0.0285$ \\
    ${}+{}$ZNE         & 2 & on  & $0.0277$ & $0.0341$ & $+0.0043$ \\
  \end{tabular}
  \end{ruledtabular}
\end{table}

The ladder is not flat.  Mean absolute error falls by $51\%$ from raw execution
to readout mitigation, and by a further $30\%$ with zero-noise extrapolation,
for a total reduction of $65\%$ (Table~\ref{tab:hw_ladder},
Figure~\ref{fig:hw_ladder}).  The effect is strongest on the systematic
component: the bias falls from $+0.0553$ to $+0.0043$, a factor of $13$, while
the scatter about the exact values improves far less.  Mitigation is therefore
correcting the depolarising contraction identified in
Section~\ref{sec:hw_energy}, which is the error component that would otherwise
propagate into the learned gradients.

\begin{figure*}[htbp]
  \centering
  \includegraphics[width=\textwidth]{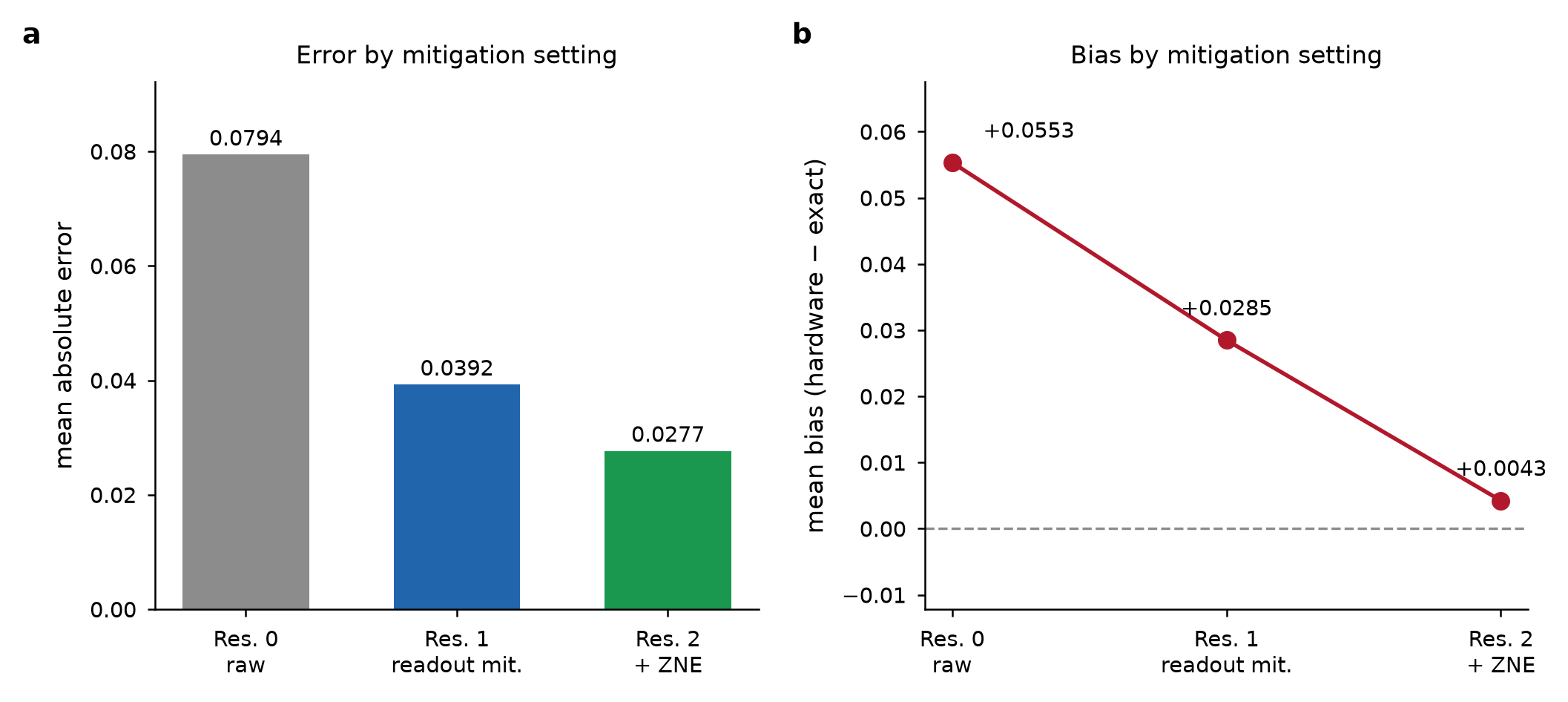}
  \caption{%
    \textbf{Measured effect of error mitigation on the Q-HNN energy.}
    (a)~Mean absolute error at each of the three mitigation settings.
    (b)~The corresponding mean bias, showing that mitigation acts on the
    systematic offset rather than on the scatter.  Each rung uses identical
    circuits and shot counts, so the differences are attributable to the
    mitigation strategy alone.
  }
  \label{fig:hw_ladder}
\end{figure*}

Two negative observations belong with this result.  First, the median relative
error does not improve monotonically across the ladder
($0.204 \rightarrow 0.123 \rightarrow 0.184$): zero-noise extrapolation
amplifies variance, and that penalty is largest where $|H|$ is smallest, which
is precisely where a relative measure is most sensitive.  Second, the
improvement is bought with device time.  The zero-noise rung cost $52$~s of
QPU time for 12 points against $15$~s for the raw rung, because extrapolation
requires executing the circuit at several noise amplifications.  At this
circuit size the ladder costs roughly three times the device time.  Both the benefit and the cost are properties of this circuit
size: Section~\ref{sec:hw_mitceiling} shows the same ladder degrading results at
depth $636$, at a device-time cost that rises faster still.

\subsection{Error Budget}\label{sec:hw_budget}

The calibration snapshot recorded with the sweep locates the dominant error
term.  For the two qubits used, the median single-qubit readout error is
$0.0164$, giving a two-qubit readout error of approximately $0.0324$, while
the \texttt{cz} gate error on the 135--139 pair is $1.01\times10^{-3}$, giving
approximately $0.0020$ across the two entangling gates.  Readout therefore
dominates gate error by a factor of about $16$ at this depth.

This ordering explains the shape of the ladder.  The large first step comes
from readout mitigation because readout is the dominant term; zero-noise
extrapolation, which targets gate noise, produces the smaller second step
because there are only two entangling gates over which to extrapolate.  The
Q-HNN is shallow enough that its error budget is a measurement problem rather
than a coherence problem: with $T_1$ of $245$~\textmu s and $T_2$ of
$111$~\textmu s against a depth-15 circuit, decoherence is not the binding
constraint.

The residual after full mitigation is at the sampling floor.  The mean absolute
error of the best setting, $0.0277$, is comparable to the median per-point
shot-noise standard deviation of $0.0245$ at 4096 shots.  No further mitigation
can improve this observable at this depth; only additional shots can.  This is
a favourable position for the architecture: the circuit is shallow enough that
its hardware error is set by how long one is willing to measure, not by the
fidelity of the device.

\subsection{Scope of These Measurements}\label{sec:hw_scope}

Three limits should be read with the numbers above.  The two additional ladder
settings rest on 12 grid points rather than 36, a consequence of the device-time
allocation; the agreement between subset and full grid for the shared setting
supports the subset but does not replace a full measurement.  Each setting was
executed once, so run-to-run calibration drift is not quantified and the step
sizes in Table~\ref{tab:hw_ladder} carry an unmeasured systematic.  All three
runs were submitted within a few minutes of one another against a single
calibration snapshot, and a ladder spanning a recalibration may differ.

What the preceding subsections establish is narrower than the simulation
results and prerequisite to them: the trained Q-HNN energy transfers to
hardware with a $3\%$-of-range error whose dominant component is a correctable
systematic bias.

\subsection{The Dissipative Channel on Hardware}\label{sec:hw_minl}

The MINL channel is the one component of the framework whose behaviour a
simulator cannot settle, because on a simulator the Born-rule measurement that
drives it is itself simulated.  On \texttt{ibm\_marrakesh} the channel executes
as a native dynamic circuit---mid-circuit measurement of the bath ancilla with
a conditional feedforward rotation, and ancilla reuse holding the register at
$N+1$ qubits rather than $2N$.  We ran the $N=3$ ring for
$0\ldots4$ Trotter steps and tracked the phase-space energy
$E_{\mathrm{ps}}=\sum_i\expval{X_i}^2$, against the conservative $\gamma=0$
circuit as a control.

Measured against that control, the dissipative arm suppresses
$E_{\mathrm{ps}}$ by $96.6\%$ between the first and fourth step, where the
noiseless simulation of the same circuits predicts $39.9\%$.  Read alone, the
excess invites the conclusion that measurement-induced dissipation is acting
on hardware.  It does not support that conclusion.  The dissipative circuit is
roughly twice the transpiled depth of its conservative control
($361$ against $172$ ISA layers at the fourth step), so a larger decay is
expected from decoherence irrespective of the channel.

The discriminating experiment is a control with the measurement machinery
intact and the dissipation removed: identical mid-circuit measurement, reset
and feedforward structure, with the conditional rotation angle set to zero, so
the circuit collapses the ancilla every step but applies an identity kick.
This null control is depth-matched to the dissipative circuit to within
$15$ ISA layers.  It decays at least as much as the dissipative
circuit at every step, retaining $0.009$ of its ideal $E_{\mathrm{ps}}$ at the
fourth step against $0.034$ for the dissipative arm
(Figure~\ref{fig:hw_minl}).  The loss is therefore attributable to the
mid-circuit measurement and the depth it costs, not to the dissipative
feedforward.  At the depths reached here, on this device, physical
measurement-induced dissipation is not separable from measurement back-action.

\begin{figure*}[htbp]
  \centering
  \includegraphics[width=\textwidth]{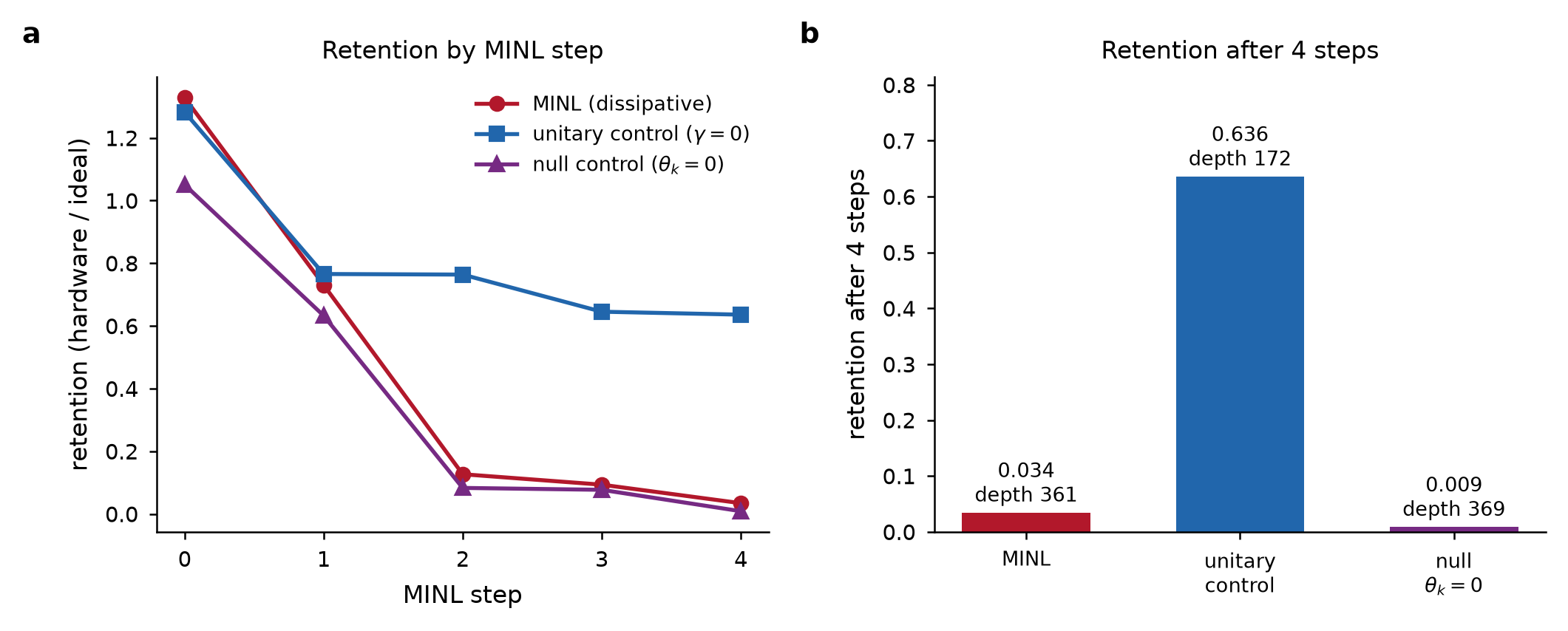}
  \caption{%
    \textbf{The dissipative channel is not separable from measurement
    back-action at this depth.}
    (a)~Phase-space energy retained on hardware relative to the noiseless
    simulation of the same circuit, $N=3$ ring.  The dissipative circuit
    (MINL) and the null control ($\theta_k=0$: same measurement structure,
    identity kick) decay together, while the shallower conservative control
    retains far more.  (b)~Transpiled depth for each arm.  The dissipative
    circuit is about twice the conservative control's depth, which is the
    confound the null control is built to remove.
  }
  \label{fig:hw_minl}
\end{figure*}

This is a limit of the present measurement, not a refutation of the channel.
Three changes would let the two contributions come apart: a shallower per-step
encoding, so the dissipative arm is not twice its control's depth; more Trotter
steps at lower per-step cost, separating rate from depth; and error mitigation
on the dynamic-circuit path, which the resilience settings used in
Section~\ref{sec:hw_mitigation} do not cover for mid-circuit measurement.
Until then, the simulation results of Section~\ref{sec:experiments} remain the
evidence for MINL dissipation, and the hardware establishes that the channel
runs natively rather than that its dissipation has been observed.

The transpilation cost sets where such an experiment can go.  Compiling the
conservative network circuit to the device at $N=4\ldots156$ shows the
practical ceiling to be depth, not qubit count: at $N=156$ routing inflates a
ring to $768$ two-qubit gates and the circuit-success proxy falls to
$0.169$, against $0.630$ at $N=100$ and $0.862$ at $N=32$.  The star
topology, whose hub has degree $N-1$ against a degree-3 heavy-hexagonal
lattice, is the most expensive family at the largest size tested
($0.810$ at $N=32$, against $0.862$ for the ring and $0.866$
for the chain), and its two-qubit count grows fastest.

This proxy counts two-qubit gate error alone, and Section~\ref{sec:hw_mirror}
shows it to be optimistic by a wide margin.

\subsection{Mirror Benchmarking at Network Scale}\label{sec:hw_mirror}

The network results of Section~\ref{sec:experiments} stop at $N=9$ because
statevector simulation is memory-bound near $N\approx24$, not because the
circuits stop at that size.  Hardware removes that bound but introduces
another: above $N\approx24$ there is no exact reference to score against.  We
therefore benchmark with mirror circuits---$U$ followed by $U^{\dagger}$---for
which the ideal output is $\ket{0\ldots0}$ at any $N$, so the return
probability is the circuit fidelity and no classical reference is needed.

We ran ring and chain topologies at $N=4,8,16,32,64,100$ and the star at
$N=4\ldots32$, two Trotter steps, $4096$ shots, as sixteen circuits in a single
job; the entire set was then repeated to measure run-to-run drift.  Both waves
cost $22$~s of device time each.

Fidelity falls far faster than the transpilation proxy predicts
(Figure~\ref{fig:hw_mirror}).  At $N=32$ the proxy gives $0.55$ for ring and
chain, while the measurement gives $0.0005$ and $0.0010$; at $N=64$ and
$N=100$ the return probability is not distinguishable from the shot-noise
floor.  Fitting $F=e^{-\alpha n_{2q}}$ over the detected points of each
topology yields effective errors per two-qubit gate of $0.0113$ (ring),
$0.0176$ (chain) and $0.0064$ (star), that is three to eight times the
calibrated \texttt{cz} error of $0.0023$.  The gap is the error the proxy
omits: readout on $N$ qubits, idle decoherence over a circuit of depth up to
$4019$, and crosstalk.  Each fit rests on the three detected sizes for that
topology, so the spread between topologies should be read as the range of a
noisy estimate rather than as a resolved ordering.

The practical consequence corrects the estimate above.  Measured mirror
fidelity exceeds $0.5$ only up to $N=8$ for all three topologies---not the
$N=64$--$100$ the gate-count proxy suggests.  A mirrored circuit is twice the
depth of the forward circuit it benchmarks, so the forward-only ceiling is
somewhat higher, but the correction is a factor of order two, not of order
ten.  Structure-preserving network circuits of the kind studied here are, on
present hardware, single-digit-$N$ objects.

Repeating the full set gave a mean absolute fidelity difference of $0.0120$
across the points above the floor, a mean relative drift of $4.0\%$ and a
maximum of $10.0\%$.  This is the first run-to-run figure in this work, and it
bounds the reproducibility of every single-shot hardware number reported
above; the two waves were submitted about a minute apart and share a
calibration epoch, so this is a lower bound on drift over longer intervals.

\begin{figure*}[htbp]
  \centering
  \includegraphics[width=\textwidth]{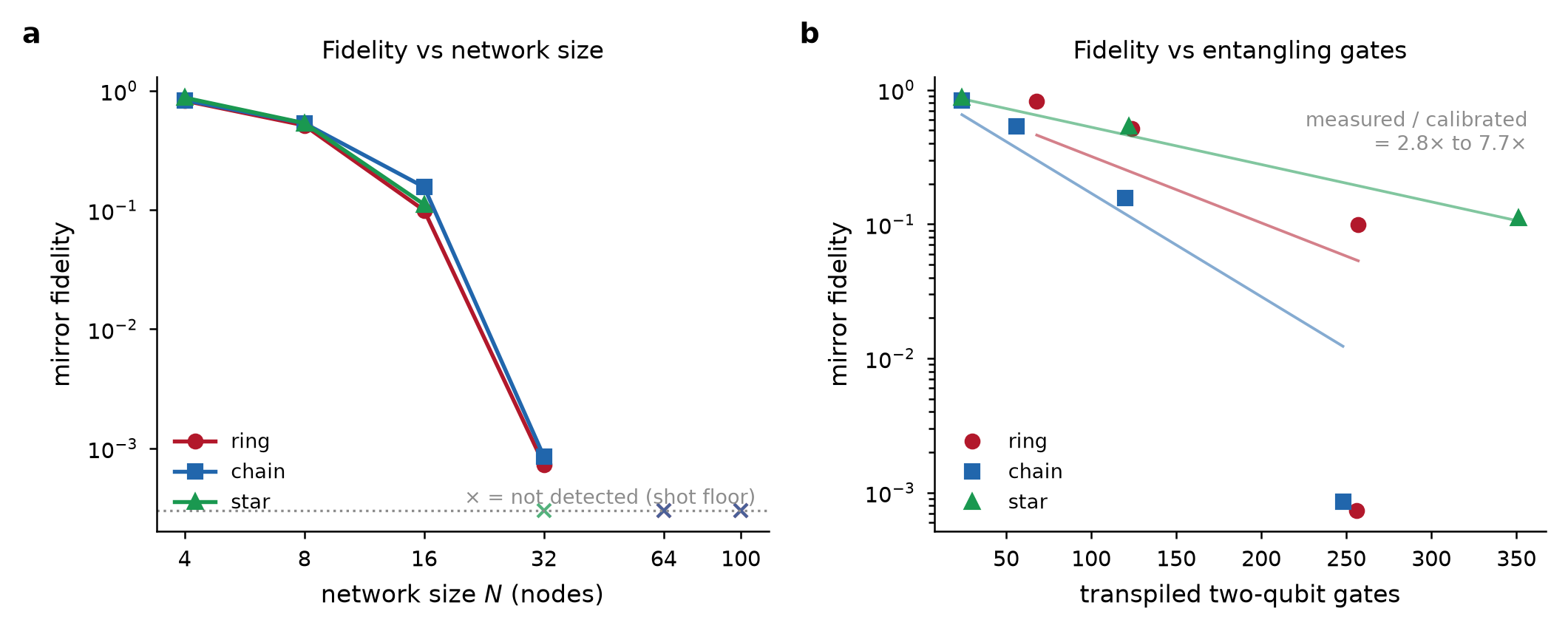}
  \caption{%
    \textbf{Usable network size is set by accumulated error, not by qubit
    count.}
    (a)~Mirror fidelity against network size for three topologies (solid,
    mean of two runs; error bars are half the run-to-run difference) with the
    two-qubit-gate proxy for the same circuits (dotted).  Open triangles mark
    points not distinguishable from the shot-noise floor.
    (b)~The same fidelities against transpiled two-qubit gate count.  The
    dashed line is a fit pooled across topologies ($0.0070$ per gate); the
    per-topology fits span $0.0064$--$0.0176$, three to eight times the
    calibrated \texttt{cz} error.
  }
  \label{fig:hw_mirror}
\end{figure*}

A mirror circuit measures whether a circuit survives the device; it is not a
physics result.  It shows neither energy conservation nor dissipation at any
$N$, and the physics claims of this work remain anchored to the sizes where an
exact reference exists.  What it establishes is the size at which such a
physics measurement could be attempted at all.  All $156$ qubits are
addressable, and every circuit up to $N=100$ transpiled and executed; the
limit is that the output stops carrying recoverable signal.

\subsection{Network Energy Conservation on Hardware}\label{sec:hw_conservation}

The mirror benchmark bounds where a physics measurement is possible; this
subsection performs one.  We evaluate the conservative ($\gamma=0$) network
energy $H$ at three points along a single trajectory, for ring, chain and star
topologies at $N=4$, $8$ and $16$---sizes at which an exact statevector
reference is still computable, so hardware can be scored rather than merely
executed.  Parameters were bound through the model's own binding routine, so
the hardware estimator and the reference share one code path.  All $27$
observable--circuit pairs ran as a single job at resilience level~1 with
dynamical decoupling and gate twirling, for $43$~s of device time.

Agreement is strong at $N=4$ and degrades sharply with size
(Figure~\ref{fig:hw_conservation}).  Mean relative error against the exact
reference rises from $4.6\%$ at $N=4$ to $17.5\%$ at $N=8$ and $36.2\%$ at
$N=16$, while the correlation with the reference falls from $r=0.989$ through
$r=0.925$ to $r=-0.279$.  The last figure is the important one: at $N=16$ the
measured energies are not a noisy version of the true energies, they are
uncorrelated with them.

Conservation along the trajectory tells the same story in the quantity the
theory constrains.  The discretised trajectory is not an exact flow, so the
reference energy itself drifts by $1.0$--$3.1\%$; hardware drift should be read
against that figure rather than against zero.  At $N=4$ measured drift is
$0.28$--$2.90\%$, comparable to the reference.  At $N=8$ it begins to exceed
it ($2.2$--$7.3\%$), and at $N=16$ it diverges, reaching $46.3\%$ for the star
topology against an exact drift of $0.97\%$.

\begin{figure*}[htbp]
  \centering
  \includegraphics[width=\textwidth]{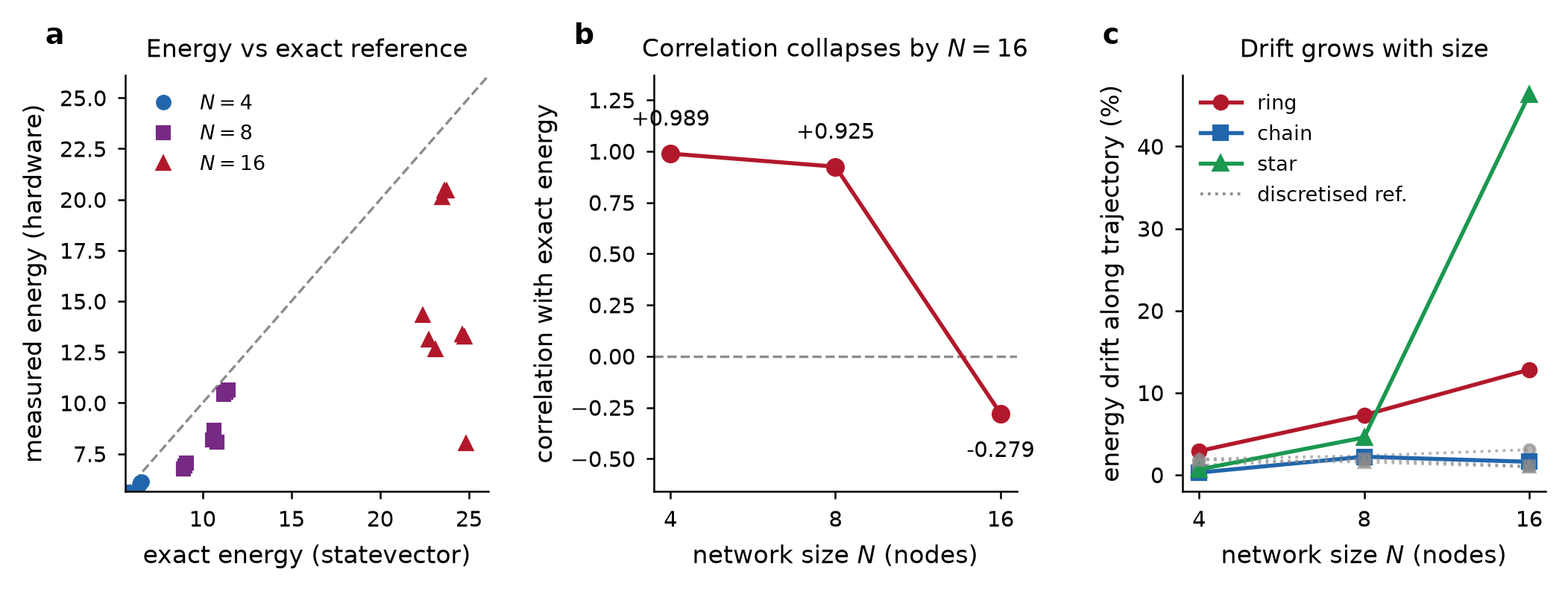}
  \caption{%
    \textbf{Network energy conservation transfers to hardware at $N=4$ and
    fails by $N=16$.}
    (a)~Hardware against exact energy for all $27$ points; error bars are
    estimator shot standard deviations and the dashed line is equality.
    (b)~Mean relative error by size, annotated with the Pearson correlation
    against the reference.
    (c)~Energy drift along each trajectory by topology, against the drift of
    the discretised reference trajectory itself (grey).
  }
  \label{fig:hw_conservation}
\end{figure*}

The mechanism is visible in the observable.  The network energy is a sum of
$O(N)$ Pauli terms, so its variance grows with $N$ while the signal does not,
and each term carries the readout error that Section~\ref{sec:hw_budget}
identified as the dominant contribution.  This measurement and the mirror
benchmark are independent---different observables, different circuits, no
shared reference---and they place the ceiling in the same region: mirror
fidelity at $N=16$ was $0.10$--$0.16$, and the energy estimate at $N=16$ is
uncorrelated with truth.  The simulated network results of
Section~\ref{sec:experiments}, which extend to $N=9$, sit inside the range
where hardware still tracks the reference at $N=8$ and outside it by $N=16$.

These measurements were taken at resilience level~1.  Whether a stronger
mitigation stack recovers the $N=16$ result is a question this data cannot
answer, and Section~\ref{sec:hw_mitceiling} answers it directly by re-measuring
the same states: it does not.

Two limits qualify this.  Each $(\text{topology},N)$ pair contributes one
trajectory of three points, so these are not statistics over initial
conditions; and the job was executed once, with the $4.0\%$ mean relative drift
of Section~\ref{sec:hw_mirror} the only available bound on reproducibility.

\subsection{The Depth Ceiling of Error Mitigation}\label{sec:hw_mitceiling}

Section~\ref{sec:hw_mitigation} measured a $65\%$ error reduction from the
mitigation ladder on the two-qubit, depth-$15$ Q-HNN circuit.  It is natural to
ask whether the same ladder rescues the network measurements of
Section~\ref{sec:hw_conservation}, whose errors at $N=8$ and $N=16$ sit $72$ and
$353$ times above the shot-noise floor and are therefore bias-dominated---and
bias is what zero-noise extrapolation targets.

We tested this directly.  The $N=8$ and $N=16$ conservation points were
re-measured on identical states and parameters, changing only the mitigation
stack: resilience level~2 with ZNE using the probabilistic-error-amplification
(PEA) amplifier at noise factors $\{1,3,5\}$, XY4 dynamical decoupling, and gate
twirling over $32$ randomisations.  PEA learns the twirled Pauli--Lindblad noise
model of each entangling layer and amplifies by injecting noise drawn from that
model, rather than by synthetic gate folding.  The states were regenerated from
the original seeded construction and verified against the stored exact energies
to $0.0$ before submission, so mitigation is the only variable.

\emph{The stronger stack made both sizes worse.}  Mean absolute error rose by
$11.2\%$ at $N=8$ ($1.7556\rightarrow1.9522$) and by $35.8\%$ at $N=16$
($8.6383\rightarrow11.7325$).  Correlation with the exact energies fell from
$0.9251$ to $0.6803$ at $N=8$ and remained negative at $N=16$
($-0.2793\rightarrow-0.2826$).  The contraction of the measured energies toward
zero deepened rather than lifting---at $N=16$ from $0.636$ to $0.505$ of the
exact magnitude---and the reported uncertainties grew by a factor of $2.1$--$2.5$
(Figure~\ref{fig:hw_mitceiling}a,b).

The mechanism is visible in the amplified circuits themselves.  ZNE extrapolates
to zero noise from executions at $1\times$, $3\times$ and $5\times$ amplification,
which presumes those executions still carry signal.  At the decay rates measured
independently by mirror benchmarking (Section~\ref{sec:hw_mirror}), the
$N=16$ circuits retain between $0.05\%$ and $6.5\%$ of their signal at
$3\times$ and less at $5\times$, depending on topology
(Figure~\ref{fig:hw_mitceiling}c).  The fit is then anchored on two
uninformative points and the extrapolation is unconstrained; the inflated
uncertainties are that instability made visible.  Mitigation cannot recover a
signal that amplification has already destroyed.

\begin{figure*}[htbp]
  \centering
  \includegraphics[width=\textwidth]{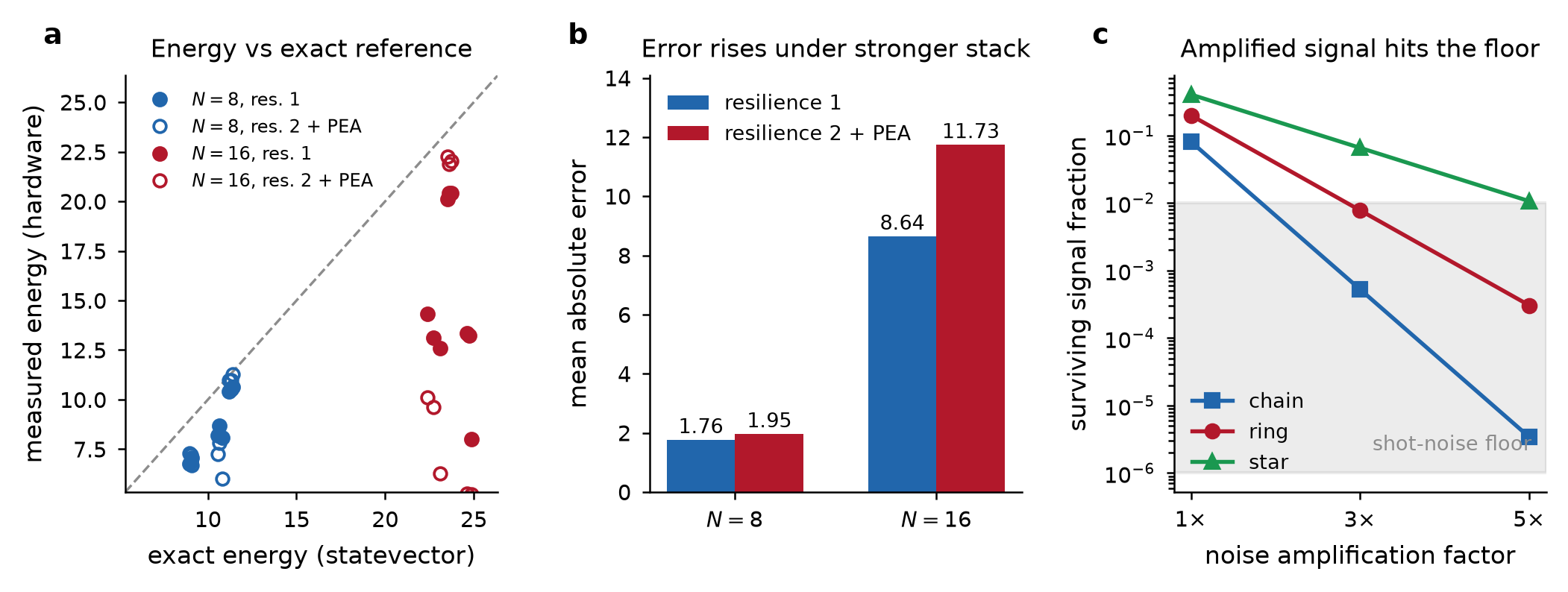}
  \caption{%
    \textbf{Error mitigation has a depth ceiling.}
    (a)~Measured against exact energy for the same states under resilience~1
    (filled) and resilience~2 with PEA and XY4 (open); dashed line is equality.
    (b)~Mean absolute error rises at both sizes under the stronger stack.
    (c)~Surviving signal fraction at each ZNE amplification factor for $N=16$,
    computed at the mirror-measured decay rates, against the shot-noise floor.
    All values in this subsection are read from the archived record of this
    run.
  }
  \label{fig:hw_mitceiling}
\end{figure*}

Two conclusions follow.  The $N=16$ failure of
Section~\ref{sec:hw_conservation} is not an artifact of insufficient error
mitigation: the standard ladder, applied at its strongest available setting,
does not move it.  This strengthens rather than weakens the ceiling claim, since
the limit is accumulated physical error rather than a correctable systematic.
And the ladder's own scope is bounded: that improvement was measured at
depth $15$ and does not extend to depth $636$.  We did not re-measure $N=4$, so the crossover depth at which
mitigation stops helping is bracketed between these two values rather than
located.

One caveat of practice is worth recording, because it governs what such a test
costs.  The PEA amplifier performs a noise-learning pass per entangling layer
before it amplifies, and this dominates the charge: the run consumed $330$~s of
device time against a $90$~s estimate extrapolated from the plain-ZNE overhead
factor.  Budgeting PEA as a multiple of gate-folding ZNE understates it
substantially at these depths.

\subsection{Coupled Nonlinear Pendulums, $N=2$ to $12$}\label{sec:hw_pendulum}

The preceding subsections evaluate \emph{trained} circuits, where the depth of
the variational ansatz sets the size ceiling.  A second route to the same
physics avoids that ceiling entirely.  Because the isomorphic Hamiltonian
mapping is structural, the coefficients of the coupled-pendulum Hamiltonian
follow from $K$ and $g$ directly, with no optimisation: the energy can be
\emph{measured} rather than learned.  Combining the on-site correspondence
$\expval{Z_i}=\cos\phi_i$ with Equation~\eqref{eq:zzxx} gives
\begin{equation}
  H = \tfrac12\textstyle\sum_i\omega_i^2
      + g\bigl(N-\sum_i\expval{Z_i}\bigr)
      + K\bigl(|E| - \sum_{(i,j)\in E}[\expval{Z_iZ_j}+\expval{X_iX_j}]\bigr),
  \label{eq:H_measured}
\end{equation}
exact at every $N$ (verified to $8\times10^{-15}$).  The circuits are a single
layer of $R_y$ rotations---transpiled depth $4$---so the accumulated-error
ceiling of Section~\ref{sec:hw_mirror} does not bind.

We evaluated chains of $N=2,4,8$ and $12$ pendulums at five global phase shifts
each, reading the three commuting observable groups of
Equation~\eqref{eq:H_measured} for $60$ observable--circuit pairs in one job at
resilience level~1, for $80$~s of device time.  Agreement with the exact
statevector values is $r\geq0.9997$ at every size, and---unlike the trained
network of Section~\ref{sec:hw_conservation}---\emph{it does not degrade with
$N$}: the mean relative error in the reconstructed $H$ is $2.53\%$ at $N=2$,
$1.06\%$ at $N=4$, $1.39\%$ at $N=8$ and $0.43\%$ at $N=12$
(Figure~\ref{fig:hw_pendulum}a,c).  The apparent improvement with size is a
scale effect: $H$ grows with $N$ while the per-qubit readout error does not, so
the same absolute error ($0.02$--$0.05$) is a smaller fraction of a larger
number.  The relevant statement is that accuracy is flat in $N$ where the
trained-circuit measurements fell apart.

This experiment also settles on hardware the read-out question raised in
Section~\ref{sec:net_experiments}.  Under a global phase shift the classical coupling
energy is exactly constant.  Measured with the corrected $ZZ+XX$ observable it
is constant to $0.05$--$0.17$ across all four sizes, consistent with shot noise.
Measured with the $ZZ$-only read-out used throughout the simulation results, it
varies by $0.76$ at $N=2$, rising monotonically to $9.25$ at $N=12$
(Figure~\ref{fig:hw_pendulum}b): \emph{the symmetry violation grows with system
size}.  This is the one place in this work where hardware measurement identifies
a correctable defect in the framework rather than a limitation of the device.

\begin{figure*}[htbp]
  \centering
  \includegraphics[width=\textwidth]{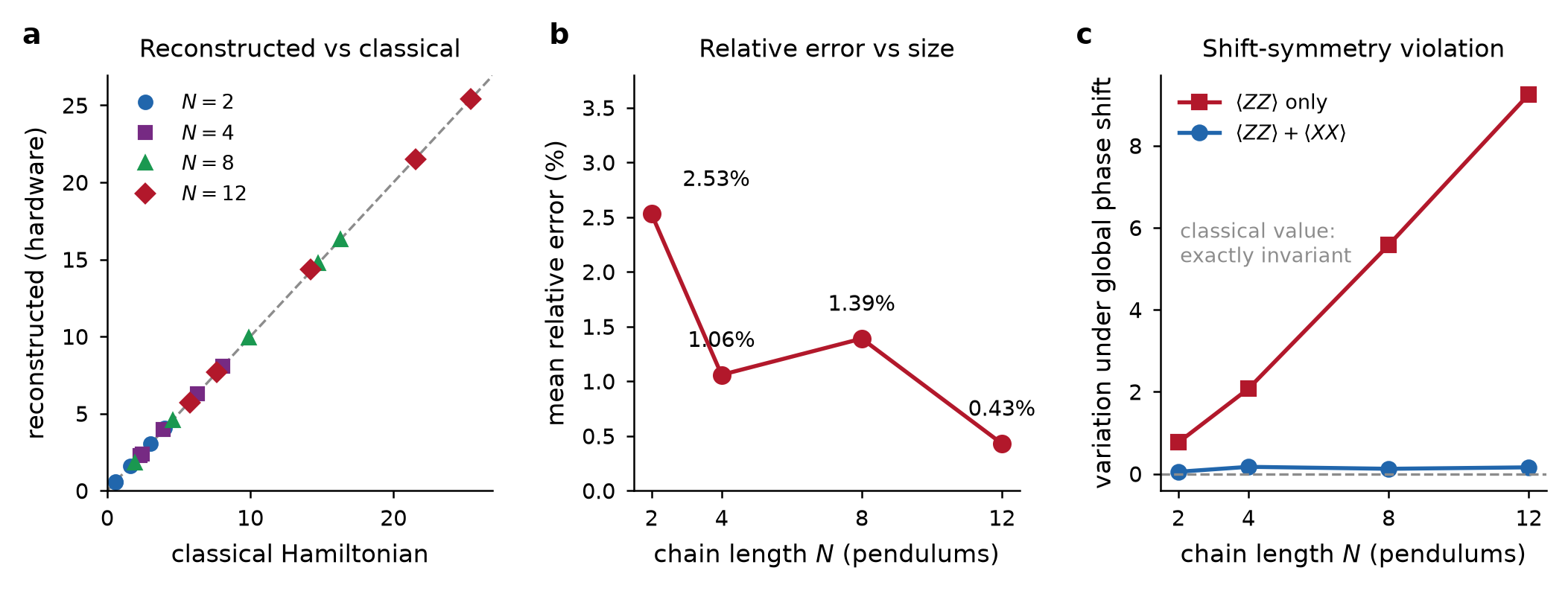}
  \caption{%
    \textbf{Coupled nonlinear pendulums measured directly on hardware,
    $N=2$--$12$.}
    (a)~Hardware against exact energy for all $20$ states; dashed line is
    equality.
    (b)~Variation of the coupling energy under a global phase shift, which the
    classical Hamiltonian leaves exactly invariant.  The corrected $ZZ+XX$
    read-out is constant to within shot noise; the $ZZ$-only read-out's
    violation grows with $N$.
    (c)~Mean relative error in the reconstructed $H$, annotated with the
    correlation against the exact values.  Accuracy is flat in $N$ because the
    circuits are depth-$4$ regardless of size.
    All points: \texttt{ibm\_marrakesh}, one job, $80$~s of device time, no
    training.
  }
  \label{fig:hw_pendulum}
\end{figure*}

Two limits apply.  No training is involved, so this tests the mapping and the
device, not the learning procedure---it is complementary to
Section~\ref{sec:hw_conservation}, not a replacement for it.  And the kinetic
term of Equation~\eqref{eq:H_measured} is evaluated classically from the state;
only the configurational terms are measured.

\section{Discussion}\label{sec:discussion}

\subsection{The IHM as a Constructive Design Principle}

The Isomorphic Hamiltonian Mapping provides structural guarantees not through
penalty terms or post-hoc projection, but through quantum physics.
Unitarity of $U_J$ enforces $\mathbf{J}=-\mathbf{J}^\top$; positive probability
of Born-rule measurement enforces $\mathbf{R}\succeq 0$.
The Q-pHNN therefore produces a learning architecture in which the
port-Hamiltonian constitutive relations are \emph{impossible to violate},
regardless of the parameter values optimised by BFGS or COBYLA.
This contrasts with classical pHNNs~\cite{desai2021}, which must project
$\mathbf{J}$ and $\mathbf{R}$ into their respective matrix cones at every
gradient step---projections that add computational overhead, can fail for
non-square Jacobians, and require careful numerical conditioning.
The recent SympNets~\cite{jin2020} achieve symplecticity through a product
of elementary symplectic layers; the Q-HNN achieves it through the algebraic
isomorphism of the IHM, requiring no special architectural design beyond the
choice of observable.

\subsection{Connection to Open Quantum Systems}

The MINL measurement step has a precise relationship to open quantum system
theory~\cite{lindblad1976}.  A projective measurement on the ancilla
followed by feedforward constitutes a Kraus map
$\mathcal{E}(\rho)=\sum_b K_b\rho K_b^\dagger$ with
$K_b=P_b\otimes R_x(\theta_k)^b$, which is a completely-positive
trace-preserving (CPTP) map---the elementary building block of the
Lindblad master equation.  The Q-pHNN therefore learns a discrete-time
approximation of Lindblad dynamics, with the jump rate controlled by
$\theta_R$ and the jump direction by $\theta_k$.
This connection suggests a natural extension to genuine open quantum system
identification, learning the dissipator structure of a qubit coupled to a
thermal bath from trajectory observations~\cite{popovych2022}.

\subsection{Positioning Relative to Prior Work}

Quantum Hamiltonian learning~\cite{mitarai2018} and variational quantum
eigensolvers~\cite{cerezo2021} address the inverse problem of identifying the
Hamiltonian of a \emph{quantum} system from quantum measurements.
The Q-pHNN framework is distinct: it operates in \emph{classical} phase space,
learning the dynamics of \emph{classical} physical systems from classical
trajectory data.  The quantum circuit is a parameterised function approximator,
not a model of the measurement process.
Physics-informed neural networks~\cite{raissi2019} and their quantum
analogues penalise violation of differential equations in a loss function;
the Q-pHNN encodes the geometric structure of the equation directly into the
circuit architecture, providing hard constraints rather than soft penalties.
Compared to Neural ODEs~\cite{chen2018}---which provide continuous-time
dynamics but no structural guarantees---the Q-pHNN is specialised to
Hamiltonian and port-Hamiltonian systems and provides exact energy conservation
or monotone dissipation by circuit construction.
The symplectic parameter-shift rule builds on the standard PSR~\cite{schuld2019},
extending it to data-encoding gates to compute exact Hamilton's equations---a
computation that data-driven methods such as SINDy~\cite{brunton2016} and
universal differential equations~\cite{rackauckas2020} approximate numerically.

\subsection{Why Quantum Circuits for Classical Dynamics?}

A natural question is whether a classical neural network would not suffice.
The Q-pHNN's advantages are structural and forward-looking, not merely
computational at present scale:
\begin{enumerate}[nosep]
  \item \textbf{Exact analytic gradients from data-encoding gates.}
    The Parameter-Shift Rule provides exact partial derivatives of the
    circuit observable with respect to phase-space coordinates from two
    circuit evaluations, without automatic differentiation or finite
    differences.  This is inherent to the quantum computing paradigm and
    enables exact BFGS convergence in 8--13 iterations.
  \item \textbf{Hardware-native Lindblad dynamics.}
    The MINL channel is not a software approximation of dissipation: it is
    a genuine discrete-time CPTP map, expressible on any device that
    supports mid-circuit measurement and classical feedforward
    (IBM Eagle/Heron, IonQ Aria, Quantinuum H2).
    Expressibility is not the same as measurability, however: on an IBM Heron
    processor the channel executes as a native dynamic circuit, but the energy
    decay it produces cannot be separated from the back-action of the
    mid-circuit measurement that implements it (Section~\ref{sec:hardware}).
    Demonstrating the dissipation as a physical effect, rather than as a
    correct simulation of one, requires a device or an encoding at which the
    two contributions come apart.
  \item \textbf{Long-term quantum advantage for large systems.}
    As quantum hardware matures, the same Q-pHNN circuit---trained here
    in classical statevector simulation---may be executed on real hardware,
    potentially exploiting genuine quantum parallelism for high-dimensional
    phase spaces where classical structure-preserving methods are
    computationally expensive.  Mirror benchmarking in
    Section~\ref{sec:hardware} places the present ceiling for the network
    circuit at single-digit $N$ on a $156$-qubit device: every circuit up to
    $N=100$ executes, but the output ceases to carry recoverable signal well
    before that.  The limit is accumulated error, not qubit count.
\end{enumerate}

\subsection{Limitations}

\paragraph{Circuit expressibility.}
The 2-qubit, 4-parameter circuit ($L=1$) is sufficient for the nonlinear
pendulum: the layer ablation (Section~\ref{sec:ablation}) shows that $L=2,3$
achieve identical accuracy, confirming that $L=1$ saturates the expressibility
for this 2-DOF system.  The $1.35\%$ energy drift (after scale correction)
is the circuit's irreducible approximation residual---the $ZZ$ observable
cannot perfectly represent the non-quadratic pendulum level sets with only 4
parameters.  For the damped oscillator, the residual trajectory phase offset
is an honest expressibility bound: the 4-parameter $ZZ$ ansatz cannot
simultaneously represent the large-amplitude initial transient and the slow
decaying tail.  For higher-dimensional or strongly nonlinear systems,
increasing $L$ or adopting richer observables will be necessary.

\paragraph{MINL stochasticity.}
Each Q-pHNN forward pass is stochastic, since dissipation is produced by
Born-rule measurement.  With finite shots the single-oscillator loss variance
is substantial and COBYLA converges more slowly than a gradient method; the
network study controls this by reading the phase-space energy from several
thousand shots per step ($4000$ for the $N=6$ demonstration,
$3000$ for the scaling sweep).  Generalised parameter-shift rules for quantum
channels~\cite{schuld2019} could, in principle, provide analytic gradients
even through the measurement step.  This stochasticity is not a defect but the
physical signature of measurement-induced dissipation: the same shot noise is
what a real device would exhibit.

\paragraph{Damping model.}
The MINL channel realises per-node amplitude damping with rate $\gamma_i$ set by
the ancilla rotation angle $\theta_{R,i}$.  Real systems may exhibit
state-dependent or anisotropic dissipation.  A natural extension is to make
$\theta_{R,i}$ a function of the measured state, at the cost of an additional
controlled rotation per Trotter step.

\paragraph{From one to many degrees of freedom.}
The single-oscillator models are lifted to $N$-node coupled-phasor networks by
the topology-entangled QGNN of Section~\ref{sec:net_architecture}, which uses
one qubit per node and places two-qubit entanglers only on physical coupling
edges.  Both IHM properties survive the
lift: unitarity of the graph-structured $U_J$ conserves the network energy
exactly ($|\dot H|=0$ in floating point at every $N$ and topology), and the
multi-ancilla MINL channel dissipates the network phase-space energy by
$92$--$98\%$ across ring, star, and chain networks at $N\in\{3,6,9\}$---the
measurement-based analogue of the passivity inequality, with one bath ancilla
per node.

\paragraph{Network scaling and barren plateaus.}
Statevector simulation of the QGNN is exact but scales as $2^{N}$; the present
study reaches $N=9$ nodes (plus $N$ bath ancillas for the MINL channel) in
simulation.  On hardware the binding constraint is different and was measured
directly: accumulated gate and readout error, not memory, caps the usable
network at single-digit $N$ (Section~\ref{sec:hw_mirror}).  A more
fundamental obstacle
for larger $N$ and deeper circuits is the \emph{barren plateau}
problem~\cite{mcclean2018}: gradient magnitudes of cost functions defined on
random parameterised quantum circuits decay exponentially with the number of
qubits.  Topology-aware initialisation strategies, layerwise training, or
hardware-efficient parameterisations~\cite{cerezo2021} that exploit the network
structure are natural mitigations.

\subsection{Future Directions and Cross-Domain Applications}
\label{sec:future}

The Q-pHNN framework establishes a general-purpose bridge between
port-Hamiltonian system theory and quantum machine learning.
Below we outline the most promising application domains and the specific
extensions required for each.

\paragraph{Multi-omic twins of the cell.}
A twin fitted to biochemical time-series would inherit a port-Hamiltonian
budget when the measured pools exchange material through a regulatory graph and
lose it through turnover~\cite{karlebach2008}: a free-energy-like scalar plays
the role of $\mathcal H$, exchange fluxes populate $\mathbf J$, and degradation
rates populate $\mathbf R$.  The topology-entangled QGNN maps onto such a graph
with one qubit per molecular pool and $ZZ$ entanglers on co-regulated pairs, so
the learned damping rates $\boldsymbol\gamma$ would become an estimate of
turnover half-lives read off the fitted twin rather than assumed.

A candidate falsifiable observable is the aggregate transcript-to-protein cascade lag,
which such a twin should reproduce from data alone.  For metabolic networks,
the resilience framework of~\cite{gao2016} predicts universal tipping behaviour
near a critical damping threshold, which a fitted $\gamma$ could test without
assuming the network equations \textit{a priori}.  The universal differential
equations framework~\cite{rackauckas2020} offers a natural interface, with UDE
residuals parameterised by Q-pHNN circuits.  Establishing any of this as an
application would require appropriate longitudinal data, matched classical
baselines, and validation at scales beyond exact statevector simulation.

\paragraph{Neural twins and brain-computer interfaces.}
The rotational structure reported in motor cortex population
recordings~\cite{shenoy2013} is well described by a Hamiltonian flow, with a
rotational kinetic energy as the conserved quantity and movement termination as
a dissipative transition; a twin fitted to those recordings would therefore
carry the same $(\mathbf J,\mathbf R)$ split.  A Q-HNN decoder built on such a
twin would have energy monotonicity as a structural certificate rather than a
trained behaviour, which would bound the trajectory drift that afflicts
recurrent decoders.  The LFADS architecture~\cite{pandarinath2018} shows that
sequential auto-encoders infer low-dimensional neural manifolds; a Q-pHNN
decoder on the same manifold would additionally carry port-Hamiltonian
geometry, a constraint LFADS does not impose.  A decoding claim would require
considerably more than the present work supplies: spectral controls the twin
must pass, closed-loop and real-time evaluation, and execution at population
scale, which is bounded by the fidelity ceiling measured in
Section~\ref{sec:hw_mirror} rather than by qubit count.

\paragraph{Quantitative finance and stochastic market dynamics.}
Financial markets exhibit conservation-like laws (put-call parity, absence of
arbitrage) and irreversible dissipation (volatility mean-reversion, liquidity
decay).  Quantum Monte Carlo algorithms provide quadratic speedups for
derivative pricing~\cite{rebentrost2018b}, and quantum optimisation has been
applied to portfolio selection~\cite{orus2019}.
The Q-pHNN framework adds a structural layer: the $\mathbf{J}$-channel encodes
no-arbitrage constraints as a conserved symplectic structure, while the
$\mathbf{R}$-channel learns the mean-reversion rate $\gamma$ from implied
volatility surfaces.
For stochastic volatility models (Heston, SABR), the state-dependent damping
extension---$\gamma(\xvec)$ parameterised by a second PQC---directly gives
a quantum analogue of the mean-reversion function, with the IHM guaranteeing
that the learned model never produces explosive volatility.

\paragraph{Cardiac electrophysiology and medical digital twins.}
Cardiac action potential propagation is a high-dimensional dissipative
port-Hamiltonian system~\cite{trayanova2011}: ionic channels are ports, the
membrane potential is the energy variable, and conduction velocity is
conserved across wave propagation fronts.  Re-entrant arrhythmias correspond
to limit cycles in the phase-space of the cardiac port-Hamiltonian
system---identifiable from the Q-pHNN's learned energy function as persistent
non-decaying orbits.
The topology-entangled QGNN maps the Purkinje fibre network: one qubit per
anatomical node (sinoatrial node, atrioventricular node, bundle branches),
$ZZ$ entanglers on conduction edges, and the energy-monotone dissipation
fraction as a real-time arrhythmia risk index.

\paragraph{Engineering control systems and robotics.}
The SE(3) Hamiltonian dynamics of rigid-body robots have been modelled by
neural ODEs~\cite{duong2021}; the Q-pHNN provides the same capability with
structural guarantees.  A robot arm modelled as a conservative Q-HNN (joint
torques as port inputs) guarantees that learned energy functions satisfy
Lyapunov stability conditions by construction---a property absent from
unconstrained neural ODE controllers.  The $\mathbf{R}$-channel models joint
friction; the learned $\gamma_i$ per joint provides a self-calibrating
digital twin that identifies wear without sensor instrumentation.

\paragraph{Methodological extensions.}
Three near-term developments would substantially extend the framework:
(i)~\textit{State-dependent dissipation:} parameterise $\gamma(q,p)$ as a
second PQC on the same qubits, evaluated with 4 additional PSR evaluations;
(ii)~\textit{Multi-frequency encoding:} use Fourier feature maps~\cite{schuld2021}
in the data-encoding gates to extend the expressible function class beyond
trigonometric polynomials;
(iii)~\textit{Noise-aware training:} replace noiseless statevector simulation with
a shot-based noisy simulator to train directly in the shot-noise regime,
enabling transfer to real hardware without hyperparameter re-tuning.

\section{Conclusion}\label{sec:conclusion}

We have introduced Quantum Port-Hamiltonian Neural Networks (Q-pHNNs), a
framework that exploits a formal algebraic isomorphism between
Port-Hamiltonian systems and quantum circuit primitives to learn conservative
and dissipative classical dynamics in a structure-preserving manner.

The Isomorphic Hamiltonian Mapping (Definition~\ref{def:ihm}) identifies
the skew-symmetric structure $\mathbf{J}$ with unitary gate evolution and
the positive-semidefinite structure $\mathbf{R}$ with Born-rule projective
measurement, providing structural guarantees by circuit construction
(Theorem~\ref{thm:iso}).
The symplectic parameter-shift rule (Theorem~\ref{thm:psr}) enables exact
analytic computation of Hamilton's equations from data-encoding gates,
requiring only four circuit evaluations per phase-space point.

Three experiments validate the framework.
Q-HNN achieves $\mathbf{1.35\%}$ relative energy drift on the nonlinear pendulum
with a 4-parameter, 2-qubit circuit using a St\"ormer--Verlet symplectic integrator
over 300 steps---a $9.1\times$ improvement over the first-order Euler
baseline (12.3\%).
A jointly-optimised energy scale $s^*=1.335$ corrects the gradient-magnitude
mismatch between the bounded $\langle ZZ\rangle\in[-1,1]$ observable and the
true pendulum energy, recovering the correct oscillation frequency and reaching
a validation $\dot{q}$ MSE of $0.040$ in only 8 BFGS iterations.
The layer ablation confirms $L=1$ saturates expressibility for this 2-DOF system:
additional layers add cost but no accuracy gain, supporting the IHM's claim that
structure, not depth, is the key design ingredient.

The Q-pHNN achieves $100\%$ energy monotonicity across 30 independent MINL
trajectory runs on the damped oscillator, confirming that Born-rule measurement
accurately models irreversible dissipation.
Lifted to networks, the multi-ancilla MINL channel dissipates the phase-space
energy by $92$--$98\%$---monotonically at every step---across ring, star, and
chain topologies at sizes $N\in\{3,6,9\}$ in simulation, the measurement-based
analogue of the passivity inequality, while the conservative mode conserves
energy to machine precision at every scale.

These results establish quantum circuits as structure-preserving surrogates
for classical port-Hamiltonian systems.  The framework is extensible to
multi-dimensional systems, state-dependent dissipation, and---via the
QGNN lift---arbitrarily large coupled-oscillator networks while preserving
all IHM structural guarantees.
The mid-circuit measurement and classical-feedforward operations used
throughout are compatible with current measurement-capable hardware
(superconducting and trapped-ion processors).  We take a first step along that
path here: on an IBM Heron processor the trained Q-HNN energy surface and its
parameter-shift gradients reproduce their simulated values, while the
dissipative channel---though it executes natively---does not yet yield a
dissipation signal separable from the measurement back-action that implements
it.  The remaining gap is quantitative and is set by circuit depth rather than
by qubit count.
In the longer term, the Q-pHNN framework provides a template for quantum
digital twins of neuromechanical, biochemical reaction-network, and
engineering control systems---domains where the interplay of conservative
mechanics and irreversible dissipation is fundamental.

The Q-HNN experiments are deterministic under fixed random seeds;
the Q-pHNN dynamic circuit is intrinsically stochastic, as each forward pass
draws Born-rule measurement outcomes, so its reported metrics vary slightly
between runs at fixed seed.  All experiments use only the architectures and
hyperparameters specified in Sec.~\ref{sec:methods} and
Appendix~\ref{app:hparams}.

\section*{Data and Code Availability}

All hardware runs reported in Section~\ref{sec:hardware} were executed on the
IBM Quantum Platform.  The complete record of each run---the IBM job
identifier, the device calibration snapshot taken at submission, the
transpiled circuit statistics, the raw expectation values, and the exact
statevector reference against which they are scored---is archived with the
source code, together with the software that regenerates every figure and an
automated consistency check that verifies each numeric literal in this paper
against its archived value.  Nothing quoted here was transcribed by hand. 
Most updated experiment codes are available at \\
\url{https://github.com/mindverse-computing/quantum-port-hamiltonian}

\bibliography{references}

\appendix

\section{Hyperparameters and Experimental Details}\label{app:hparams}

All experiments are fully specified by the architectures and settings of
Section~\ref{sec:methods}.  Table~\ref{tab:hparams} collects all
hyperparameters; no hyperparameter tuning was performed---all values
were fixed before running the experiments reported here.

\begin{table}[htbp]
  \centering
  \small
  \begin{tabular}{lll}
    \toprule
    \textbf{Parameter} & \textbf{Q-HNN} & \textbf{Q-pHNN (MINL)} \\
    \midrule
    Qubits              & 2 (sys)           & 2 (sys+anc)       \\
    Circuit layers $L$  & 1                 & ---               \\
    Trainable params    & 4                 & 3                 \\
    Training samples $N_s$ & 200            & 6 (trajectory)    \\
    Train/val split     & 80/20             & ---               \\
    Optimiser           & BFGS              & COBYLA            \\
    Max iterations      & 200               & 80                \\
    Shots per eval      & ---               & 30                \\
    Integrator          & St\"ormer--Verlet & Euler             \\
    Rollout steps $K$   & 300               & 6                 \\
    Time step $\Delta t$& 0.05              & 1.0               \\
    Initial condition   & $(0.8,\,0.0)$     & $(1.5,\,0.0)$     \\
    Random seed         & 42                & ---               \\
    \midrule
    \multicolumn{3}{l}{\textit{Ground-truth generation}} \\
    Integrator          & RK4               & RK4               \\
    System              & Nonlinear pendulum & Damped harmonic oscillator \\
    True $\gamma$       & ---               & 0.3               \\
    \midrule
    \multicolumn{3}{l}{\textit{Quantum evaluation}} \\
    Objective           & $\langle ZZ\rangle$ expectation & Born-rule ancilla measurement \\
    Simulation          & Exact statevector & Exact statevector \\
    \bottomrule
  \end{tabular}
  \caption{Hyperparameters for the single-oscillator experiments.
    The network QGNN conservative experiment uses $N=3$ nodes, $L=2$ layers,
    60 training samples, ring coupling with strength 1.0, and BFGS to
    convergence (45 iterations); the MINL network scaling study uses
    $N\in\{3,6,9\}$, ring/star/chain topologies, 8 Trotter steps, and 3000
    measurement shots per step, with no optimiser.
    All seeds fixed before running.}
  \label{tab:hparams}
\end{table}

\section{Notation Summary}\label{app:notation}

Table~\ref{tab:notation} collects the symbols used throughout.

\begin{table}[htbp]
  \centering
  \small
  \begin{tabular}{ll}
    \toprule
    \textbf{Symbol} & \textbf{Meaning} \\
    \midrule
    $\mathbf{J}$       & Skew-symmetric interconnection matrix of the pH system \\
    $\mathbf{R}$       & Positive-semidefinite dissipation matrix \\
    $H(\mathbf{x})$    & Scalar Hamiltonian (energy function) \\
    $\mathbf{G}(\mathbf{x})u$ & External port input \\
    $\thetavec$        & Trainable circuit parameters \\
    $\phi$             & Full parameter vector of circuit weights and read-out scale \\
    $\gamma$           & Per-node amplitude-damping rate (set by MINL ancilla angle $\theta_R$) \\
    $U_J(\thetavec)$   & Unitary quantum gate (IHM image of $\mathbf{J}$) \\
    $\mathcal{M}_R$    & MINL measurement map (IHM image of $\mathbf{R}$) \\
    $H_{\thetavec}$    & Quantum energy proxy $= \langle ZZ \rangle_{\thetavec}$ \\
    $\mathcal{E}(\rho)$ & CPTP map $= \sum_b K_b \rho K_b^\dagger$ \\
    $K_b$              & Kraus operator for measurement outcome $b$ \\
    $R_x, R_y, R_z$    & Single-qubit rotation gates \\
    $CZ$               & Controlled-$Z$ two-qubit gate \\
    $R_{ZZ}(\beta)$    & Parameterised $ZZ$ rotation gate \\
    $N_s$              & Number of training samples \\
    $N$                & Number of network nodes (QGNN) \\
    $L$                & Number of variational entanglement layers \\
    $K$                & Number of rollout integration steps \\
    $\Delta t$         & Integration time step \\
    $\varepsilon_E^{\mathrm{rel}}$ & Relative energy conservation error \\
    $f_{\mathrm{mono}}$ & Energy monotone fraction \\
    $E_{\mathrm{ps}}$  & Network phase-space energy $\tfrac12\sum_i(\langle X_i\rangle^2+\langle Y_i\rangle^2)$ \\
    \bottomrule
  \end{tabular}
  \caption{Notation used throughout the manuscript.}
  \label{tab:notation}
\end{table}

\end{document}